\newtheorem{definition}{Definition}
\newtheorem{proposition}{Proposition}
\crefname{algocf}{algorithm}{algorithms}
\Crefname{algocf}{Algorithm}{Algorithms}
\crefname{proposition}{proposition}{propositions}
\Crefname{proposition}{Proposition}{Propositions}
\crefname{definition}{definition}{definitions}
\Crefname{definition}{Definition}{Definitions}
\newcommand{\name}{COMPOSE\xspace}
\renewcommand{\paragraph}[1]{{\vspace{1mm}\noindent\bfseries #1}.}
\definecolor{colorFull}{RGB}{0, 50, 150}    %
\definecolor{colorSelf}{RGB}{200, 80, 0}    %
\definecolor{colorOpt}{RGB}{0, 100, 50}     %
\newcommand{\bestFull}{\color{colorFull}\bfseries}
\newcommand{\bestSelf}{\color{colorSelf}\bfseries}
\newcommand{\bestOpt}{\color{colorOpt}\bfseries}
\newcommand{\cU}{\mathcal{U}}
\newcommand{\cE}{\mathcal{E}}
\newcommand{\bx}{\mathbf{x}}
\DeclareMathOperator*{\argmax}{arg\,max}
\DeclareMathOperator*{\argmin}{arg\,min}
\title{COMPOSE: Hypergraph Cover Optimization for Multi-view 3D Human Pose Estimation}
\author{%
  Tony Danjun Wang$^{1,2}$\thanks{Correspondence to: \texttt{tony.wang@tum.de}}
  \And
  Tolga Birdal$^{3}$
  \And
  Nassir Navab$^{1,2}$
  \And
  Lennart Bastian$^{1,2,3}$
  \AND
  \mdseries
  $^{1}$School of Computation, Information, and Technology, Technical University of Munich, Germany \\
  $^{2}$Munich Center for Machine Learning, Germany \\
  $^{3}$Department of Computing, Imperial College London, United Kingdom
}
\begin{document}

\maketitle

\begin{abstract}
    3D human pose estimation from sparse multi-view camera rigs is an essential task for numerous applications, including action recognition, sports analysis, and human-robot interaction.
    While learned methods dominate the field on benchmarks, they require large annotated datasets; training-free optimization-based methods remain promising as they circumvent 3D supervision by solving a correspondence problem across views from 2D detections.
    Existing combinatorial formulations rely on pairwise associations to model this correspondence problem and enforce global consistency across views only as a downstream constraint.
    However, reconciling locally plausible pairwise matches becomes brittle under occlusion and noisy detections, where local errors propagate globally.
    We propose \textsc{\name}, which recasts multi-view 3D human pose estimation as a weighted exact-cover optimization over a hypergraph of person hypotheses.
    Our formulation replaces pairwise association and post-hoc consistency enforcement with a single global combinatorial objective.
    To address the exponentially large candidate space, we introduce a geometric pruning strategy alongside two complementary solvers: an exact Integer Linear Programming formulation and a scalable relaxation via Belief Propagation.
    Without any 3D supervision, \name improves average precision by up to 31 points over the best optimization-based method and 13 points over self-supervised learned methods, demonstrating the effectiveness of higher-order combinatorial association for training-free multi-view 3D human pose estimation.
\end{abstract}

\begin{figure}[ht]
    \centering
    \includegraphics[width=1.0 \textwidth]{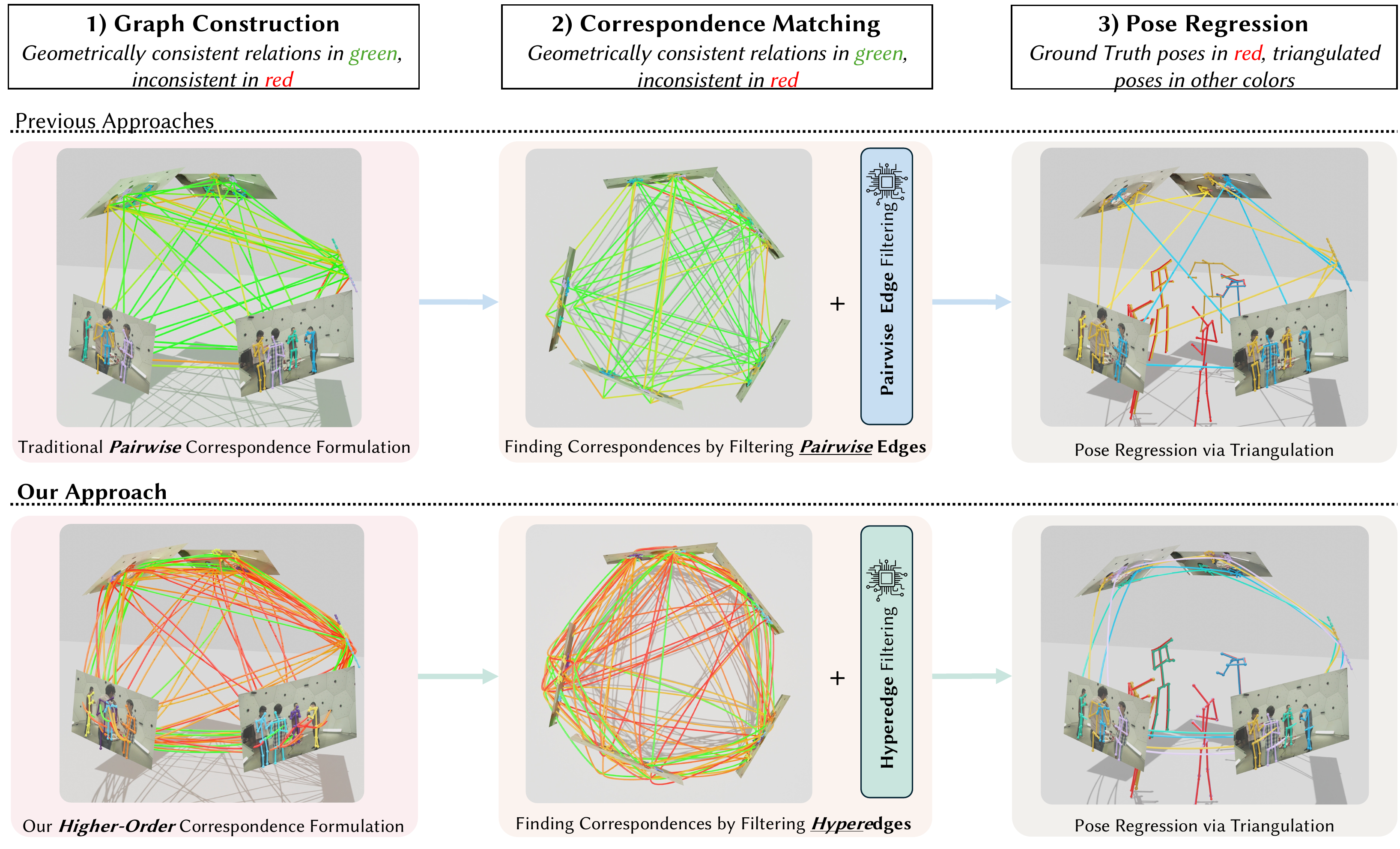}
    \caption{
    \textbf{Top:} Traditional approaches rely on pairwise geometric constraints 
    \cite{dong2019fast,wu_GraphBased3DMultiPerson_2021}. 
    As illustrated, these methods generate pairwise associations that, 
    while \textit{locally} consistent between two views, often fail to form a 
    \textit{globally} coherent structure. Consequently, algorithms face the 
    difficult task of reconciling these locally plausible but globally conflicting 
    edges via \textit{synchronization } to recover the correct 3D poses.
    \textbf{Bottom:} We propose \textsc{\name}, a hypergraph formulation that jointly models 
    higher-order relationships across views. We re-frame correspondence matching 
    as a \textit{hypergraph partitioning} problem, where hyperedges encode 
    multi-view consistency. 
    This global formulation effectively resolves ambiguities by enforcing consensus across the entire set of views.
    }
    \label{fig:teaser}
    \vspace{-5.0mm}
\end{figure}

\section{Introduction}
\label{sec:introduction}
\vspace{-2mm}

Human pose estimation is a fundamental task in computer vision, yet its deployment in safety-critical scenarios remains challenging. 
While significant progress has been made in monocular 2D settings due to the availability of large-scale annotated datasets \cite{coco_dataset,martinez2017simple,zhu2023motionbert}, these methods inherently lack metric depth information and can be prone to occlusions \cite{zheng2023deep}. 
In real-world applications such as collaborative human-robot interaction \cite{goodrich2008human} and operating room monitoring \cite{wang2025beyond}, precise spatial localization thus requires lifting 2D detections across multiple camera views into a joint 3D coordinate system \cite{dong2019fast,panoptic}. 
By capturing activities from diverse viewpoints, these setups enable triangulation, thereby resolving depth ambiguities and noise inherent to single-view imaging.

Even with advances in 2D backbones, the community has largely shifted toward end-to-end learning-based methods for multi-view 3D human pose estimation \cite{tu_VoxelPoseMulticamera3D_2020,ye_FasterVoxelPoseRealtime_2022,liao_MultipleViewGeometry_2024}.
These approaches have significantly advanced the field, but require large annotated datasets.
However, obtaining ground-truth 3D annotations is labor-intensive and technically challenging, as skeletal joints are internal to the human body and cannot be directly observed or accurately annotated from surface images alone. 
Furthermore, learning-based models frequently suffer from domain gaps, failing to generalize to unseen environments or novel camera configurations
\cite{liao_MultipleViewGeometry_2024,chharia_MVSSMMultiViewState_2025,srivastav_SelfPose3dSelfSupervisedMultiPerson_2024}. 
Domain-agnostic methods that do not rely on 3D supervision are thus highly sought after.

Training-free optimization techniques offer a complementary paradigm.
Rather than amortizing inference cost over a training distribution by learning model parameters, these methods explicitly solve the multi-view assignment problem at test time for each instance, without extensive supervision or annotations. 
As such, they do not require domain-specific 3D pose detectors and can encode geometric and combinatorial constraints, such as reprojection consistency and multi-person exclusivity, directly in the objective. 
Moreover, they preserve a modular separation between 2D perception and 3D reconstruction: improved off-the-shelf keypoint detectors can be incorporated without retraining the geometric solver. 
These properties are particularly appealing for deployments with novel camera layouts, limited access to 3D ground truth, or environments that exhibit a significant distribution shift from the training data of learned 3D pose models.

We present \textsc{\name}, a hypergraph formulation for 3D human pose estimation that models multi-view relations between \textit{2D observations} as a weighted exact-cover over a hypergraph (see \cref{fig:teaser}). 
Existing methods model multi-view relationships by \textit{synchronizing} pairwise relations to recover a cycle-consistent matching between views~\cite{dong2019fast,birdal2019probabilistic,chen2025learning} (see \cref{sec:related_works} for a comprehensive overview).
However, these methods are notoriously reliant on pairwise matches computed in isolation (\cref{fig:teaser}, top), \textbf{making it challenging to resolve ambiguities when views are occluded or noisy} \cite{han2022multi}.

To alleviate these challenges, \textsc{\name} extends modeling beyond pairwise relationships by abstracting 2D image correspondence in a hypergraph formulation, where each hyperedge represents a candidate multi-view person hypothesis.
By scoring sets of 2D detections simultaneously, \name enforces a holistic consensus across all cameras, improving robustness to outliers in individual views (see \cref{fig:teaser}, where hyperedges better express global geometric consistency).
To solve this higher-order partitioning problem, we propose two complementary optimization strategies: an exact Integer Linear Programming (ILP) formulation and a scalable probabilistic relaxation using loopy Belief Propagation (BP).
The ILP formulation recovers the globally optimal assignment, while the BP relaxation produces continuous marginal beliefs over hyperedges.
This enables fully parallelizable, GPU-accelerated inference and yields soft confidence scores over the associations, providing an uncertainty-aware alternative to hard assignments.
With these optimization-based approaches, the proposed method successfully reconstructs accurate 3D poses from multi-view 2D observations, even in the presence of severe occlusions.
Furthermore, experiments demonstrate that our hypergraph-based approach outperforms state-of-the-art optimization-based and recent self-supervised learning methods.

Our main contributions can be summarized as follows:
\begin{itemize}[leftmargin=*, itemsep=0.2em, topsep=0.2em, parsep=0pt, partopsep=0pt]
    \item We cast multi-view 3D human pose estimation as weighted exact-cover over a hypergraph of person hypotheses, making the multi-view hypothesis --- rather than the pairwise match --- the atomic unit of association.
    \item We introduce a combinatorial optimization objective with a geometric pruning strategy, enabling recovery of the globally optimal cover via Integer Linear Programming.
    \item We derive a probabilistic relaxation of the exact-cover objective and solve it with loopy Belief Propagation, yielding parallelizable GPU-accelerated inference and continuous marginal beliefs.
    \item We show \textsc{\name} improves over state-of-the-art optimization-based baselines under identical 2D detections and surpasses recent self-supervised methods without any 3D supervision.
\end{itemize}

\begin{figure}[t!]
    \centering
    \includegraphics[width=\textwidth]{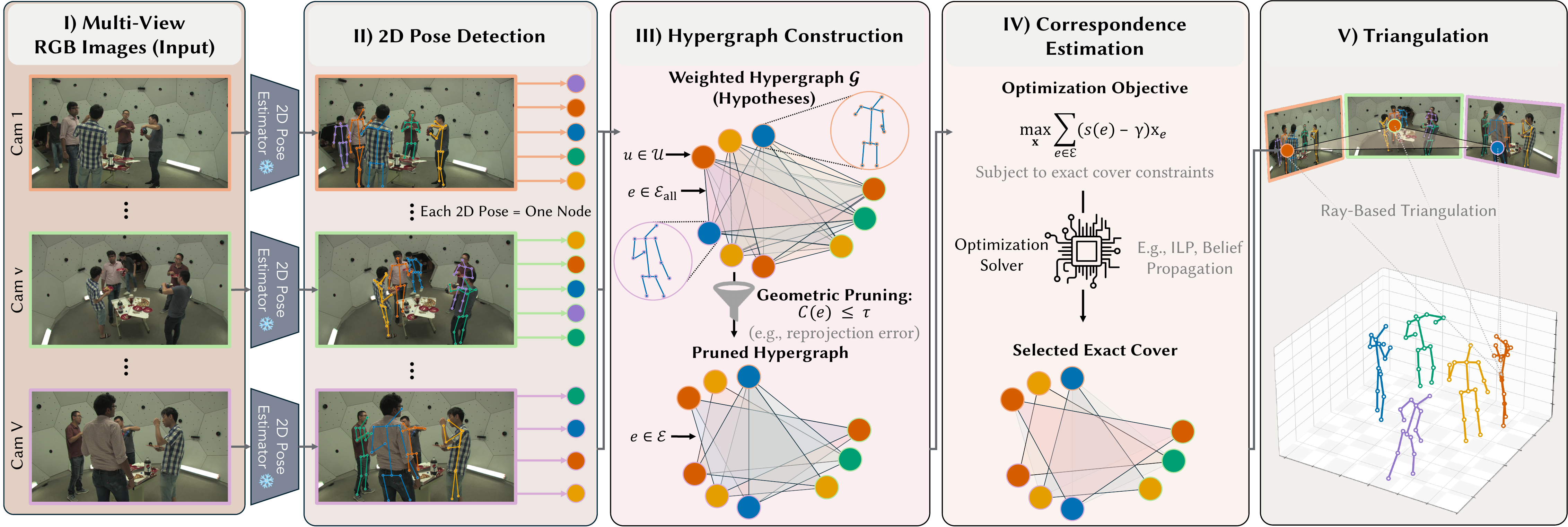}
    \caption{
    \textbf{\name}. 
    Multi-view images are taken as input (I).
    We employ an off-the-shelf 2D pose estimator to extract 2D keypoints (II).
    Next, we construct the weighted hypergraph (III). 
    We then solve the exact cover problem to optimally partition the graph and establish unique correspondences (IV).
    Finally, we triangulate the 3D human poses from the obtained correspondences (V).}
    \label{fig:pipeline}
    \vspace{-5.0mm}
\end{figure}

\section{Methodology}
\label{sec:methodology}
\vspace{-2mm}

\textsc{\name} addresses multi-view multi-person 3D pose estimation by decomposing the task into 2D detection, higher-order association, and triangulation, as illustrated in \cref{fig:pipeline}. 
2D poses are first independently detected in each camera view using an off-the-shelf pose estimator.
The central modeling choice in \textsc{\name} is to treat an entire candidate 3D person hypothesis as the atomic association, rather than pairwise matching. 
Pairwise edges express only local compatibility between detections; a hyperedge can group detections across multiple views and represents the hypothesis that they are projections of the same physical individual.
This turns correspondence recovery into a \textit{weighted exact-cover problem} over a hypergraph: the solver selects a set of mutually exclusive hypotheses that explains the observed 2D detections, with scores derived from geometric consistency.

\subsection{Problem Setting and Hypergraph Construction}
\label{subsec:hypergraph_formulation}
\vspace{-2mm}

We consider $V$ calibrated RGB cameras with projection functions $\{\pi_v\}_{v=1}^{V}$.
For each view $v$, a 2D pose detector produces pose detections $\cU_v = \{U_i^v\}_{i=1}^{n_v}$.
Each detection $U_i^v$ contains $J$ image-space joints, with joint $j$ denoted by $\mathbf{u}_{i,j}^v \in \mathbb{R}^{2}$.
The set of all 2D pose detections is $\cU = \bigcup_{v=1}^{V} \cU_v$.

We model multi-view correspondences as a hypergraph $\mathcal{G}=(\cU,\cE)$, where the vertex set $\cU$ consists of all 2D pose detections and each hyperedge $e\in\cE$ is a candidate correspondence group.
A valid hyperedge contains at most one detection per camera view: a valid correspondence should have \textbf{at most one detection per image}:
\[
    |e \cap \cU_v| \leq 1, \qquad \forall v,
\]

We denote $\cE_{\mathrm{all}}$ as the set of all non-empty hyperedges satisfying this constraint.
A non-singleton hyperedge represents one candidate 3D person hypothesis: all its 2D detections are hypothesized projections of the same physical person.
For instance, $e=\{U_2^1,U_5^3,U_1^4\}$ groups one detection from each of views $1$, $3$, and $4$ into a single multi-view correspondence hypothesis.
Singleton hyperedges are retained to explain unmatched, false-positive, or single-view detections.
In practice, \textsc{\name} operates on a pruned hypergraph with $\cE\subseteq\cE_{\mathrm{all}}$ (see \cref{subsec:hyperedge_scoring}).

\subsection{Weighted Exact-Cover Optimization}
\label{subsec:ilp}
\vspace{-2mm}

Given a candidate hyperedge set $\cE$ and a compatibility score $s(e)$ for each hyperedge, we propose to model multi-person association as a \textit{weighted exact-cover problem} \cite{Karp1972Reducibility}, optimizing over a disjoint subset of hyperedges, ensuring that every 2D pose detection is explained exactly once.

\begin{definition}[Weighted Exact-Cover ILP]
\label{def:ilp}
Let $\bx=(x_e)_{e\in\cE}$ with $x_e\in\{0,1\}$ indicate whether hyperedge $e$ is selected.
\textsc{\name} solves
\begin{align}
    \label{eq:ilp}
    \max_{\bx}\quad
    & \sum_{e \in \cE} (s(e)-\gamma)x_e \\
    \mathrm{s.t.}\quad
    & \sum_{e:\,u\in e}x_e = 1,
    \quad \forall u\in\cU, \\
    & x_e\in\{0,1\},
    \quad \forall e\in\cE.
\end{align}
Here, $s(e)$ is a hyperedge compatibility score, and $\gamma>0$ penalizes the number of selected hyperedges, thereby favoring compact explanations that group geometrically consistent detections across views.
\end{definition}

The resulting association is the selected cover
$\cE^{\star}=\{e\in\cE:x_e=1\}$, comprising vertex-disjoint hyperedges that jointly cover all observed 2D detections.

\subsection{Hyperedge Scoring and Candidate Pruning}
\label{subsec:hyperedge_scoring}
\vspace{-2mm}

\paragraph{Hyperedge Scoring}
To realize the ILP, we now specify the compatibility score $s(e)$ and prune $\cE_{\mathrm{all}}$ to a tractable candidate set $\cE$.
For calibrated cameras, we score non-singleton hyperedges by reprojection consistency.
For each candidate hyperedge $e$ with $|e|\geq 2$, let $\hat{\mathbf{y}}_j(e)$ be the DLT triangulation of joint $j$ from the detections in $e$.
We define the reprojection cost
\[
    \mathcal{C}(e)
    =
    \frac{1}{J|e|}
    \sum_{j=1}^{J}
    \sum_{U_i^v \in e}
    \left\|
        \pi_v(\hat{\mathbf{y}}_j(e)) - \mathbf{u}_{i,j}^{v}
    \right\|^2.
\]
The corresponding compatibility score is
\[
    s(e) = \exp(-\lambda \cdot \mathcal{C}(e)),
\]
where $\lambda$ controls the sensitivity to reprojection error.
Singletons receive a fixed prior score $s_{\texttt{single}}$.

The complete candidate set $\cE_{\mathrm{all}}$ contains all combinations of detections across non-empty subsets of views.
We observe that the size of this problem grows exponentially with the number of views:
\begin{proposition}[Number of candidate hyperedges]
\label{prop:numhe}
The total number of potential hyperedges $M$ is given by:
\[
M = \sum_{\emptyset \neq S \subseteq \{1,\dots,V\}} \prod_{v \in S} n_v = \prod_{v=1}^{V}(1 + n_v) - 1
\]
where $S$ represents a subset of views and $n_{v} = |\cU_v|$ is the number of detected poses in view $v$.
This implies exponential growth with respect to the number of views $V$, yielding a complexity $\mathcal{O}((N+1)^{V})$, where $N=\max_v n_v$ denotes the maximum number of detected poses in any single view.
\end{proposition}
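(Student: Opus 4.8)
The plan is to establish the claim in three stages: first verify that the quantity being counted equals the left-hand sum by direct enumeration, then prove the closed-form identity algebraically, and finally derive the asymptotic bound. The identity itself is elementary, so the emphasis should be on making the enumeration and the complexity assumption precise.

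First I would argue that $M$, the number of potential hyperedges in $\mathcal{E}_{all}$, equals the stated sum by direct counting. By definition, each hyperedge is a hypothesis linking exactly one detection from each of a nonempty collection of views. Such a hypothesis is uniquely specified by (i) the nonempty subset $S \subseteq \{1,\dots,V\}$ of participating views, together with (ii) a choice of one of the $n_v$ detections in each view $v \in S$. For a fixed $S$, the multiplication principle gives $\prod_{v \in S} n_v$ admissible choices, and distinct $S$ yield disjoint families of hyperedges. Summing over all nonempty $S$ therefore yields $M = \sum_{\emptyset \neq S \subseteq \{1,\dots,V\}} \prod_{v \in S} n_v$, matching the first expression.

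Next I would prove the identity $\sum_{\emptyset \neq S \subseteq \{1,\dots,V\}} \prod_{v \in S} n_v = \prod_{v=1}^{V}(1+n_v) - 1$. The cleanest route is the distributive expansion of the product: expanding $\prod_{v=1}^{V}(1+n_v)$ amounts to choosing, from each factor, either the summand $1$ or the summand $n_v$. Each such choice corresponds bijectively to a subset $S \subseteq \{1,\dots,V\}$ (the indices where $n_v$ is selected) and contributes the monomial $\prod_{v \in S} n_v$, with the empty product understood as $1$. Hence $\prod_{v=1}^{V}(1+n_v) = \sum_{S \subseteq \{1,\dots,V\}} \prod_{v \in S} n_v$. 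Isolating the $S = \emptyset$ term, which contributes exactly $1$, and subtracting it recovers the sum over nonempty subsets. A short induction on $V$ is an equally valid alternative, where the inductive step multiplies the hypothesis by $(1+n_{V+1})$ and separates the new view's contribution.

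Finally, for the complexity claim I would bound each per-view detection count by the number of individuals. Assuming the top-down detector returns at most one candidate per person in each view, we have $n_v \leq N$ for every $v$. Substituting this uniform bound gives $M \leq \prod_{v=1}^{V}(1+N) - 1 = (N+1)^{V} - 1$, so $M = \mathcal{O}((N+1)^{V})$, confirming exponential growth in the number of views. I expect the only genuine subtlety to lie in this last step: the combinatorics are routine, but the transition to $\mathcal{O}((N+1)^{V})$ rests on the modeling assumption $n_v \leq N$. If spurious or duplicate detections are allowed, the bound should instead be phrased in terms of $\max_v n_v$, so the main point worth flagging is stating this assumption explicitly rather than any difficult calculation.
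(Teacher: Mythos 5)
Your proof is correct, and it takes a genuinely different (and in fact more complete) route than the paper's. The paper's proof immediately assumes the uniform case $n_v = N$ for all views and then groups hyperedges by the cardinality $k$ of the view subset, obtaining $M = \sum_{k=1}^{V}\binom{V}{k}N^k = (1+N)^V - 1$ via the binomial theorem; it never actually establishes the heterogeneous identity $\sum_{\emptyset \neq S}\prod_{v\in S} n_v = \prod_{v=1}^{V}(1+n_v)-1$ that the proposition states. You instead prove that general identity directly, via the distributive expansion of $\prod_{v=1}^{V}(1+n_v)$ and the bijection between subsets $S$ and choices of the $n_v$-term in each factor, and only then specialize to the complexity claim using the bound $n_v \leq N$. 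Your version buys two things: it covers the statement as written (arbitrary, non-uniform $n_v$), and it cleanly separates the exact count from the asymptotic claim, making explicit that $\mathcal{O}((N+1)^V)$ rests on the modeling assumption $n_v \leq N$ --- a caveat the paper glosses over by silently setting $n_v = N$ (and which, as you note, fails if spurious detections push $n_v$ above $N$, in which case $\max_v n_v$ is the right parameter). The paper's binomial-theorem argument is shorter and suffices for the big-O conclusion, but your distributive/bijective argument is the one that actually proves the displayed product formula; the inductive alternative you sketch would serve equally well.
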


\begin{proof}
For each view $v$, a valid hyperedge either selects one of the $n_v$ detections or selects no detection from that view, giving $1+n_v$ choices.
Multiplying over all views and subtracting the all-empty choice yields
\[
    M = \prod_{v=1}^{V}(1+n_v)-1
    \leq (1+N)^V - 1
    \in \mathcal{O}\bigl((N+1)^V\bigr),
    \quad N=\max_v n_v.
\]
\end{proof}

\paragraph{Candidate Pruning}
\Cref{prop:numhe} shows that naive optimization over $\cE_{\mathrm{all}}$ is intractable as the number of views or detections grows.
We therefore construct the candidate set $\cE$ used in \eqref{eq:ilp} by retaining (i) all singleton hyperedges, ensuring that the exact-cover constraints remain feasible, and (ii) multi-view hyperedges satisfying the geometric consistency criterion
\[
    \mathcal{C}(e) \leq \tau,
\]
yielding a tractable set of physically plausible hypotheses.
While we use reprojection error for calibrated cameras, our formulation is not tied to this specific metric: in weakly calibrated or uncalibrated settings, $\mathcal{C}(e)$ could instead be based on epipolar or trifocal consistency~\cite{hartley2003multiple,li2024multi,huang2021dynamic}, and the compatibility score could incorporate off-the-shelf appearance descriptors when additional visual evidence is beneficial~\cite{zhou2021learning,oquab2024dinov2}; we provide experiments in the supplementaries (\cref{app_sec:ablation_cost_functions}).

\paragraph{Solving the ILP}
Although weighted exact-cover is NP-hard \cite{Karp1972Reducibility}, the candidate pruning above substantially reduces the candidate set, enabling effective solution of \eqref{eq:ilp} via an ILP solver with branch-and-cut~\cite{forrest2005cbc}, as shown in \cref{fig:runtime_analysis}.
The ILP returns the globally optimal cover over $\cE$, after which selected non-singleton hyperedges are triangulated into 3D poses (see \cref{app_sec:weighted_triangulation} for details).

\subsection{Probabilistic Relaxation via Belief Propagation}
\label{sec:relaxation}
\vspace{-2mm}

While the ILP returns a globally optimal hard assignment over the retained candidate set $\cE$, its worst-case complexity remains exponential despite geometric pruning.
Moreover, the ILP solution does not provide marginal association uncertainty, which is useful for downstream tasks such as tracking~\cite{aharon2022bot}.
We therefore construct a probabilistic relaxation that associates each binary decision variable with a \textit{continuous marginal belief}, which can be evaluated efficiently via message passing.

The key idea is to reinterpret the ILP objective as the energy of a Gibbs distribution over the binary assignment vector $\bx=(x_e)_{e\in\cE}$.
The resulting factor graph defines a Markov random field over hyperedge-selection variables, and has the same MAP solution as the ILP under exact coverage constraints, while the relaxed version used for BP yields soft association beliefs that can be rounded into a discrete selection.
We provide guiding intuitions in the supplementary material (\cref{app_sec:additional_proofs}) and refer the reader to~\cite{yedidia2001bethe,yedidia2003understanding} for the fundamentals of belief propagation.

\begin{proposition}[Exact MAP--ILP equivalence]\label{prop:map_ilp}
Define unary factors
\begin{equation}\label{eq:unary}
  f_e(x_e) := \exp\bigl(\beta\,(s(e) - \gamma)\, x_e\bigr),
\end{equation}
constraint factors
\begin{equation}\label{eq:exclusion_exact}
  g_u\bigl(\bx_{\cE(u)}\bigr)
  =
  \begin{cases}
    1, & \sum_{e \in \cE(u)} x_e = 1,\\[2pt]
    0, & \text{otherwise},
  \end{cases}
\end{equation}
and the Gibbs distribution
\begin{equation}
  \label{eq:gibbs}
  p_\beta(\bx)
  = \frac{1}{Z}
    \prod_{e \in \cE} f_e(x_e)\;
    \prod_{u \in \cU} g_u\!\bigl(\bx_{\cE(u)}\bigr),
\end{equation}
where $Z$ is a normalization constant. Then, for any $\beta > 0$, the MAP solution of \eqref{eq:gibbs} coincides with the optimizer of the ILP \eqref{eq:ilp}.
\end{proposition}

\begin{proof}[Proof sketch]
We observe that due to monotonicity of the objective function for $\bx \in \{0, 1\}$,
$\argmax_{\bx} p_\beta(\bx) = \argmax_{\bx} \log p_\beta(\bx)$.
The constraint factors \eqref{eq:exclusion_exact} enforce
$g_u(\bx_{\cE(u)}) \in \{0,1\}$, sending any infeasible assignment
(i.e., $\sum_{e \ni u} x_e \neq 1$ for some $u$) to $-\infty$ in
log-space, so the feasible set is exactly that of the ILP.
On this feasible set,
$\log p_\beta(\bx) = \beta \sum_{e \in \cE}(s(e)-\gamma)\,x_e + \mathrm{const}$.
Since $\beta > 0$ is a positive scalar, it does not affect the
$\argmax$, which thus coincides with the ILP \eqref{eq:ilp}.
We provide a formal analysis in the supplementary materials.
\end{proof}

\subsection{Loopy Belief Propagation On The Factor Graph}
\label{ssec:bp}
\vspace{-2mm}

Exact inference on the factor graph induced by \eqref{eq:gibbs} is intractable due to the loopy structure created by hyperedges that share detections.
Thus, we use loopy belief propagation (BP) \cite{yedidia2003understanding}, which iteratively passes messages between variable nodes (hyperedges $e \in \cE$) and factor nodes (detections $u \in \cU$).

\paragraph{Relaxation Used For BP}
To obtain a softer and more robust model for loopy BP, we relax the equality
constraint with a tunable uncovered-node penalty:
\begin{equation}\label{eq:exclusion}
  g_u^{(\eta)}\!\bigl(\bx_{\cE(u)}\bigr)
  =
  \mathds{1}\!\left[\sum_{e\in\cE(u)}x_e\le 1\right]
  \exp\!\left(
    -\eta\;\mathds{1}\!\left[\sum_{e\in\cE(u)}x_e=0\right]
  \right),
  \qquad \eta\ge 0.
\end{equation}
This defines a continuum:
(i)~$\eta=0$ gives the pure ``at-most-one'' relaxation,
(ii)~$\eta\to\infty$ recovers exact coverage ($=1$) in the limit.
For BP inference, we set $\eta=0$ for computational simplicity, yielding the standard exclusion factor used in our updates.

Since all variables are binary, each message reduces to a single scalar log-ratio.
Let $m_{e \to u}$ denote the variable-to-factor log-ratio message from hyperedge~$e$ to detection~$u$, and $n_{u \to e}$ the factor-to-variable message in the reverse direction:
\begin{align}
  m_{e \to u} \coloneqq \log \frac{\mu_{e \to u}(1)}{\mu_{e \to u}(0)},
  \qquad \qquad n_{u \to e} \coloneqq \log \frac{\nu_{u \to e}(1)}{\nu_{u \to e}(0)} \label{eq:nmdef}
\end{align}
where $\mu_{e \to u}$ and $\nu_{u \to e}$ are the standard BP messages in probability space.

\paragraph{BP Inference}
Define the unary log-potential $\phi_e \coloneqq \beta\,(s(e)-\gamma)$.
The scalar BP update equations for $m_{e\to u}$ and $n_{u\to e}$ are given in the supplementary material
(\cref{app_subsec:bp_updates}). 
After $T$ iterations, the approximate marginal belief is
\[
    b_e = \sigma\bigl(\phi_e + \sum_{u \in e} n_{u \to e}\bigr),
\]
where $\sigma$ is the logistic sigmoid.
The final non-overlapping discrete selection is obtained by greedy rounding in decreasing order of $b_e$.

\paragraph{GPU-Accelerated Inference}
Within each synchronous BP iteration, all variable-to-factor messages can be computed in parallel, followed by all factor-to-variable messages.
In practice, we implement the updates as batched tensor operations in PyTorch \cite{ansel2024pytorch}, enabling GPU acceleration.
The ILP solver, however, relies on branch-and-cut \cite{forrest2005cbc}, which is not easily parallelizable, making \name-BP particularly efficient.

\begin{table}[t!]
    \caption{
    \textbf{Quantitative comparison on the CMU Panoptic dataset \cite{panoptic}.}
    We report Average Precision (AP) at millimeter thresholds, Recall, and Mean Per Joint Position Error
    (MPJPE) in mm. 
    $^{\dagger}$ uses 9 temporal frames as input. 
    $^{\ddagger}$ uses the same 2D keypoint detector as our method. 
    Best results per supervision category (full-, self-, and optimization-based) are highlighted in {\bestFull blue}, {\bestSelf orange}, and {\bestOpt green}.
    }
    \label{tab:panoptic_main}
    \centering
    \small
    \setlength{\tabcolsep}{3pt}
    \begin{tabular*}{\textwidth}{@{\extracolsep{\fill}} @{\hspace{2pt}}l
    S[table-format=2.2] S[table-format=2.2] S[table-format=2.2]
    S[table-format=3.2] S[table-format=2.2] S[table-format=3.2] }
        \toprule                                                                                   %
        \multirow{2}{*}{\textbf{Method}}                                           & \multicolumn{4}{c}{\textbf{Average Precision (AP)} ($\uparrow$)} & {\textbf{Recall} ($\uparrow$)} & {\textbf{Error} ($\downarrow$)} \\
        \cmidrule(lr){2-5} \cmidrule(lr){6-6} \cmidrule(lr){7-7}                           %
                                                                                   & {25}                             & {50}                           & {100}                          & {150}              & {@500}             & {MPJPE}            \\
        \midrule                                                                                   %
        \multicolumn{7}{l}{\textit{Fully-Supervised}}                                      \\
        \hspace{1em} Plane Sweep Pose \cite{lin_MultiViewMultiPerson3D_2021}               & 92.12                            & 98.96                          & \bestFull 99.81                & 99.84              & {--}               & 16.75              \\
        \hspace{1em} Wu \textit{et al.} \cite{wu_GraphBased3DMultiPerson_2021}             & 93.93                            & 98.93                          & 99.78                          & 99.90              & \bestFull 99.97    & 15.63              \\
        \hspace{1em} TEMPO \cite{choudhury_TEMPOEfficientMultiView_2023}                   & 89.01                            & \bestFull 99.08                & 99.76                          & \bestFull 99.93    & {--}               & 14.68              \\
        \hspace{1em} VoxelPose + 3DSA \cite{chen_3DSAMultiview3D_2025}                     & \bestFull 94.20                  & 98.49                          & 99.21                          & 99.31              & {--}               & \bestFull 13.98    \\
        \midrule                                                                                   %
        \multicolumn{7}{l}{\textit{Self-Supervised}}                                       \\
        \hspace{1em} SelfPose3d \cite{srivastav_SelfPose3dSelfSupervisedMultiPerson_2024}  & 55.13                            & \bestSelf 96.44                & \bestSelf 98.46                & \bestSelf 98.98    & \bestSelf 99.60    & 24.47              \\
        \hspace{1em} DSP$^{\dagger}$ \cite{liu_DSPDenseSparseParallel_2025}                & \bestSelf 57.60                  & 86.10                          & 94.00                          & {--}               & {--}               & \bestSelf 23.10    \\
        \midrule                                                                                   %
        \multicolumn{7}{l}{\textit{Optimization-Based}}                                    \\
        \hspace{1em} ACTOR \cite{pirinen_DomesDronesSelfSupervised_2019}                   & {--}                             & {--}                           & {--}                           & {--}               & {--}               & 168.40             \\
        \hspace{1em} MvPose$^{\ddagger}$ \cite{dong2019fast}                               & 37.63                            & 95.70                          & 97.84                          & 98.28              & 99.60              & 26.46              \\
        \hspace{1em} \name-ILP (Ours)                                                      & 66.70                            & 98.23                          & \bestOpt 99.43                 & \bestOpt 99.62     & \bestOpt 99.81     & \bestOpt 22.78     \\
        \hspace{1em} \name-BP (Ours)                                                       & \bestOpt 68.88                   & \bestOpt 98.37                 & 99.42                          & 99.61              & \bestOpt 99.81              & \bestOpt 22.78              \\
        \bottomrule
    \end{tabular*}
    \vspace{-5.0mm}
\end{table}

\section{Experiments and Results}
\label{sec:experiments_and_results}
\vspace{-2mm}

\paragraph{Datasets}
We evaluate \textsc{\name} on three public multi-view human pose datasets covering controlled indoor, heavily occluded indoor, and outdoor sparse-camera settings.
CMU Panoptic \cite{panoptic} is used for the main quantitative evaluation and camera-layout generalization experiments, while Shelf and Campus \cite{belagiannis_3DPictorialStructures_2014} evaluate robustness under indoor and fewer camera view settings.

\begin{itemize}[noitemsep,topsep=0em,leftmargin=*]
    \item \textbf{CMU Panoptic~\cite{panoptic}} is a large-scale indoor dataset captured with a dense multi-camera system. 
    Following prior work
    we use the standard evaluation protocol with cameras \texttt{3}, \texttt{6}, \texttt{12}, \texttt{13}, and \texttt{23}.
    
    \item \textbf{Shelf~\cite{belagiannis_3DPictorialStructures_2014}} captures four people interacting in a small indoor environment with severe occlusions, observed by five calibrated cameras. We follow the standard evaluation protocol.
    
    \item \textbf{Campus~\cite{belagiannis_3DPictorialStructures_2014}} captures multiple pedestrians in a courtyard with three calibrated cameras, testing robustness to fewer views and uncontrolled lighting.
    We follow the standard evaluation protocol.
\end{itemize}

\paragraph{Metrics}
For CMU Panoptic, we report Average Precision (AP) at multiple 3D distance thresholds (AP$_{25}$, AP$_{50}$, etc.), Recall at 500\,mm, and Mean Per-Joint Position Error (MPJPE).
For camera-layout generalization, we report the mean AP (mAP) averaged over the evaluated thresholds.
For Shelf and Campus, we report the standard Percentage of Correct Parts (PCP).
Metric details, including matching rules and threshold definitions, are provided in the supplementary material (\cref{app_sec:evaluation_metrics}).

\paragraph{Baselines}
We evaluate two variants of \textsc{\name}: the exact ILP solver (-ILP) and the BP relaxation (-BP).
We compare against fully supervised methods~\cite{lin_MultiViewMultiPerson3D_2021,wu_GraphBased3DMultiPerson_2021,choudhury_TEMPOEfficientMultiView_2023,chen_3DSAMultiview3D_2025}, recent self-supervised methods~\cite{srivastav_SelfPose3dSelfSupervisedMultiPerson_2024,liu_DSPDenseSparseParallel_2025}, and optimization-based methods~\cite{pirinen_DomesDronesSelfSupervised_2019,dong2019fast}.
For MvPose~\cite{dong2019fast}, the strongest optimization-based baseline, we run the official implementation using the same ViTPose++ 2D detections as \textsc{\name}.
This isolates the association and reconstruction stage from differences in 2D detector quality.
For other baselines, we report the numbers published by the respective authors.

\subsection{Quantitative Results}
\label{subsec:quantitative_results}
\vspace{-2mm}

\begin{table}[t!]
    \centering
    \caption[Quantitative Comparison Shelf and Campus]{
    \textbf{Quantitative comparison on the Shelf \cite{belagiannis_3DPictorialStructures_2014} and Campus \cite{belagiannis_3DPictorialStructures_2014} datasets}. 
    Best results per supervision category (full-, self-, and optimization-based) are highlighted in {\bestFull{blue}}, {\bestSelf{orange}}, and {\bestOpt{green}}. 
    A1, A2, and A3 refer to Actor 1, 2, and 3, respectively.}
    \label{tab:results_shelf_campus}
    \small
    \setlength{\tabcolsep}{3pt}
    \begin{tabular*}{\textwidth}{ @{\extracolsep{\fill}} l *{8}{S[table-format=2.1]} @{} }
        \toprule                                                                                              & \multicolumn{4}{c}{\textbf{Shelf (PCP \%)} ($\uparrow$)} & \multicolumn{4}{c}{\textbf{Campus (PCP \%)} ($\uparrow$)} \\
        \cmidrule(lr){2-5} \cmidrule(l){6-9} 
        \textbf{Method}                                                                                       & {A1}            & {A2}            & {A3}            & {Avg.}          & {A1}            & {A2}            & {A3}            & {Avg.}          \\
        \midrule                                                                                              
        \multicolumn{9}{l}{\textit{Fully Supervised}}                            \\
        \hspace{1em} VoxelPose \cite{tu_VoxelPoseMulticamera3D_2020}            & {\bestFull{99.3}} & 94.1            & 97.6            & 97.0            & 97.6            & 93.8            & {\bestFull{98.8}} & 96.7            \\
        \hspace{1em} Wu et al. \cite{wu_GraphBased3DMultiPerson_2021}           & {\bestFull{99.3}} & {\bestFull{96.5}} & 97.3            & {\bestFull{97.7}} & {--}            & {--}            & {--}            & {--}            \\
        \hspace{1em} TEMPO \cite{choudhury_TEMPOEfficientMultiView_2023}        & {\bestFull{99.3}} & 95.1            & {\bestFull{97.8}} & 97.4            & {\bestFull{97.7}} & {\bestFull{95.5}} & 97.9            & {\bestFull{97.3}} \\
        \midrule                                                                                              
        \multicolumn{9}{l}{\textit{Self-Supervised}}                             \\
        \hspace{1em} SelfPose3d                                                 & {\bestSelf{97.2}} & {\bestSelf{90.3}} & {\bestSelf{97.9}} & {\bestSelf{95.1}} & {\bestSelf{92.5}} & {\bestSelf{82.2}} & {\bestSelf{89.2}} & {\bestSelf{87.9}} \\
        \midrule                                                                                              
        \multicolumn{9}{l}{\textit{Optimization-Based}}                          \\
        \hspace{1em} 3DPS \cite{belagiannis_3DPictorialStructures_2014}         & 75.3            & 69.7            & 87.6            & 77.5            & 93.5            & 75.7            & 84.4            & 84.5            \\
        \hspace{1em} MvPose \cite{dong2019fast}              & 98.8            & {\bestOpt{94.1}}  & {\bestOpt{97.8}}  & {\bestOpt{96.9}}  & 97.6            & 93.3            & 98.0            & 96.3            \\
        \hspace{1em} \name-ILP (Ours)                                           & {\bestOpt{99.8}}  & 92.4            & 96.3            & 96.2            & {\bestOpt{99.4}}  & {\bestOpt{94.3}}  & {\bestOpt{98.1}}  & {\bestOpt{97.3}}  \\
        \hspace{1em} \name-BP (Ours)                                            & {\bestOpt{99.8}}  & 92.4            & 96.3            & 96.2            & {\bestOpt{99.4}}  & {\bestOpt{94.3}}  & 93.6  & 95.7  \\
        \bottomrule
    \end{tabular*}
    \vspace{-5mm}
\end{table}

\paragraph{CMU Panoptic}
\Cref{tab:panoptic_main} reports the main quantitative comparison on CMU Panoptic, where \textsc{\name} consistently improves over optimization-based baselines and recent self-supervised methods while remaining competitive with fully supervised approaches.

Against MvPose \cite{dong2019fast}, the strongest optimization-based baseline, both \name variants improve across metrics under identical 2D detections.
\name-ILP obtains the global optimum of the discrete objective over the retained set and achieves the lowest optimization-based MPJPE, reducing error from 26.46\,mm to 22.78\,mm.
\name-BP yields soft hyperedge marginals for ranking predictions and achieves strong AP scores; AP$_{25}$ improves from 37.63 for MvPose to 68.88.

Compared with fully supervised methods, \textsc{\name} remains competitive despite not training a 3D pose model.
Although direct 3D supervision benefits strict precision metrics, \textsc{\name} surpasses recent self-supervised approaches, including SelfPose3d~\cite{srivastav_SelfPose3dSelfSupervisedMultiPerson_2024} and DSP~\cite{liu_DSPDenseSparseParallel_2025}, on all reported metrics.
In particular, \name-ILP achieves lower MPJPE than DSP (22.78\,mm vs.\ 23.10\,mm), while \name-BP improves AP$_{25}$ by 13.75 points over SelfPose3d (55.13 vs. 68.88).

\begin{figure}[t!]
    \centering
    \includegraphics[width=\textwidth]{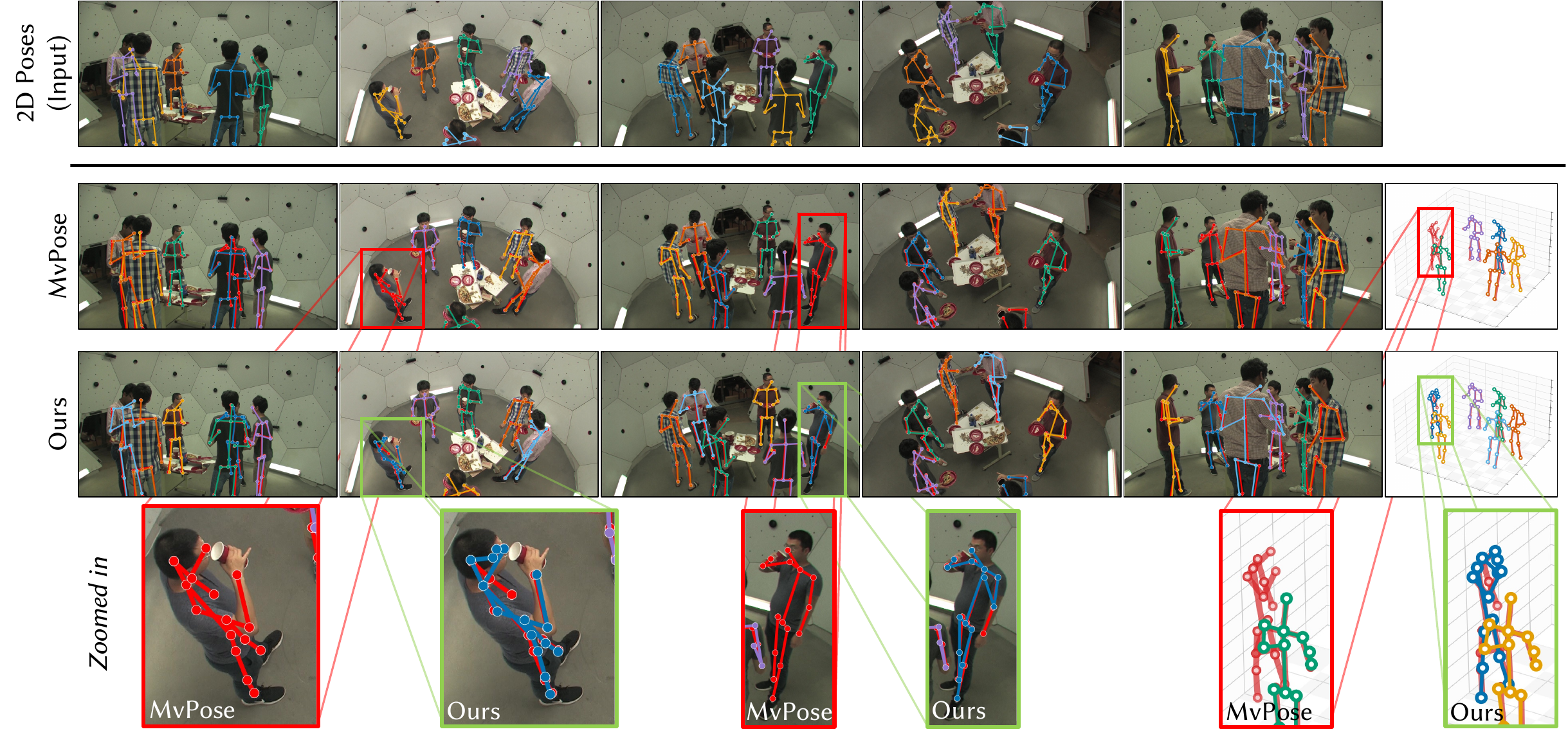}
    \caption{
    \textbf{COMPOSE recovers people missed by pairwise association}.
    Shown are Panoptic \cite{panoptic} 2D detections from ViTPose++ \cite{xu2022vitpose}, and 3D predictions of MvPose \cite{dong2019fast} versus Ours. 
    3D GT is red; predictions are colored.
    MvPose misses the highlighted person; \name reconstructs everyone.
    }
    \label{fig:panoptic_qualitative}
    \vspace{-3mm}
\end{figure}

\paragraph{Shelf and Campus}
\Cref{tab:results_shelf_campus} reports PCP results on Shelf and Campus.
On Shelf, \textsc{\name} achieves an average PCP of 96.2\%, outperforming the self-supervised baseline SelfPose3d (95.1\%) and remaining competitive with the optimization-based MvPose baseline (96.9\%).
The proposed method performs particularly well on Actor 1, reaching 99.8\% PCP.

On Campus, \name-ILP achieves 97.3\% avg. PCP, outperforming MvPose (96.3\%) and SelfPose3d (87.9\%).
This matches fully supervised TEMPO while requiring no 3D pose model training.

\subsection{Qualitative Results}
\label{subsec:qualitative_results}
\vspace{-2mm}

\begin{figure}[t!]
    \centering
    \includegraphics[width=\textwidth]{
        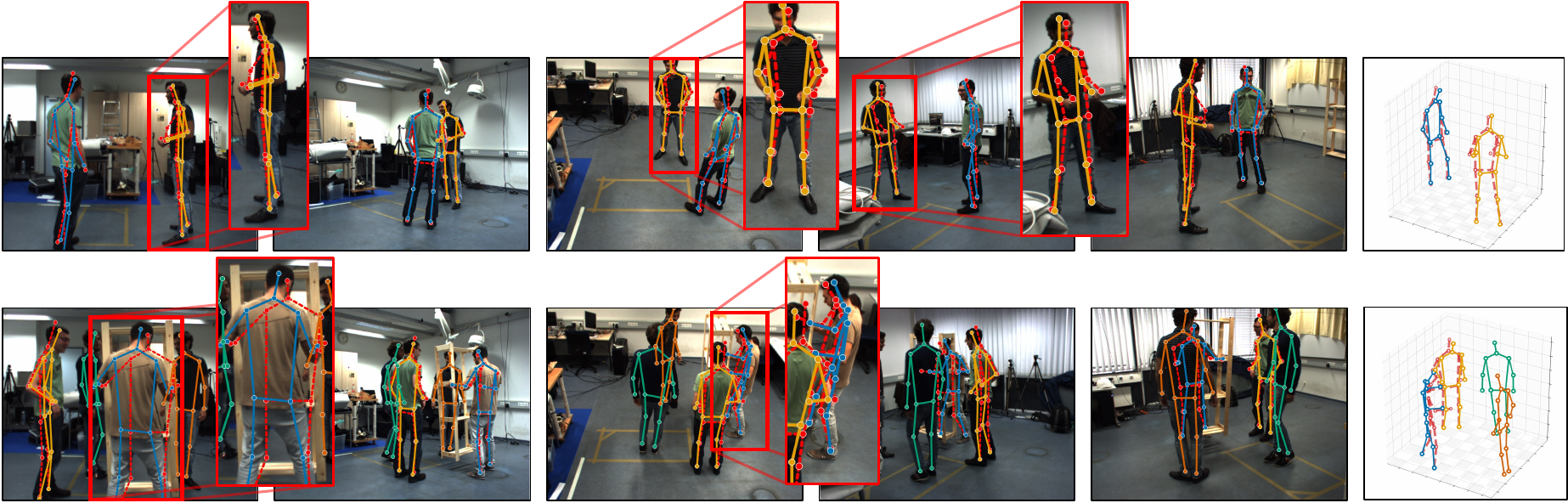
    }
    \caption{
    \textbf{Visual evidence exposes annotation noise}.
    In occlusion cases \cite{belagiannis_3DPictorialStructures_2014}, \name aligns with image evidence even when the 3D annotation (dashed red) penalizes it (solid colored).
    }
    \label{fig:shelf_qualitative}
    \vspace{-4mm}
\end{figure}

\paragraph{Comparison on CMU Panoptic}
\cref{fig:panoptic_qualitative} compares the reconstruction results of \name-ILP against MvPose \cite{dong2019fast} on the CMU Panoptic dataset. 
As illustrated in the zoomed-in regions, MvPose fails to establish correct correspondences for the highlighted individual, resulting in a missing reconstruction. 
In contrast, our method successfully processes the multi-view information and accurately reconstructs all individuals in the scene.

\paragraph{Shelf Ground-Truth Inaccuracies}
\Cref{fig:shelf_qualitative} compares our predictions with the public Shelf annotations.
We observe several frames in which the annotated 3D pose appears misaligned with the image evidence.
In the highlighted examples, the public annotations deviate from the visible actor location, whereas the \textsc{\name} prediction is visually consistent with the images.
Such cases can penalize quantitatively correct predictions under PCP.
For example, in the top example, the prediction receives 0\% PCP for the right lower and upper arms despite visually matching the actor; in the bottom example, the right lower arm, upper arm, and head are similarly penalized.

\subsection{Generalization to Camera Setups, Scalability, and Runtime}
\label{subsec:generalization_analysis}
\vspace{-2mm}

\begin{table}[t!]
    \caption{\textbf{Generalization across diverse camera setups on CMU Panoptic \cite{panoptic}.}
    The number of available cameras for each setup is indicated in parentheses.
    Best results are \textbf{highlighted}}
    \label{tab:generalization_cam_setup}
    \centering
    \small
    \setlength{\tabcolsep}{3pt}
    \begin{tabular*}{\textwidth}{ @{\extracolsep{\fill}} l l *{8}{S[table-format=2.2]} }
        \toprule                                                                                                                                                                                                                                                     %
        \multirow{2}{*}{\textbf{Type}} & \multirow{2}{*}{\textbf{Method}}                  & \multicolumn{2}{c}{\textbf{CMU1 (7)}} & \multicolumn{2}{c}{\textbf{CMU2 (7)}} & \multicolumn{2}{c}{\textbf{CMU3 (4)}} & \multicolumn{2}{c}{\textbf{CMU4 (4)}} \\
        \cmidrule(lr){3-4} \cmidrule(lr){5-6} \cmidrule(lr){7-8} \cmidrule(lr){9-10}                                                                                                                                                                                 %
                                       &                                                   & {mAP}            & {Rec.$_{500}$}            & {mAP}            & {Rec.$_{500}$}            & {mAP}            & {Rec.$_{500}$}            & {mAP}            & {Rec.$_{500}$}            \\
        \midrule                                                                                                                                                                                                                                                     %
        Self-Sup.                      & SelfPose3d \cite{srivastav_SelfPose3dSelfSupervisedMultiPerson_2024}  &  74.50           &  97.98           & 59.06            & 94.32            & 61.43            & 83.96            & 62.85            & 98.32            \\
        \midrule                                                                                                                                                                                                                                                     %
        \multirow{3}{*}{Optim.}        & MvPose \cite{dong2019fast}                        & 84.62            & 99.53            & 80.07            & 99.37            & 59.74            & \textbf{98.80}   & 74.85            & 98.59            \\
                                       & \name-ILP (Ours)                                  & \textbf{88.49}   & \textbf{99.61}   & \textbf{84.45}   & \textbf{99.58}   & 73.83            & 98.40            & \textbf{80.17}   & \textbf{99.31}   \\
                                       & \name-BP (Ours)                                   & 88.10            & 99.45            & 84.34            & 99.41            & \textbf{74.43}   & 98.39            & 79.60            & \textbf{99.31}   \\
        \bottomrule
    \end{tabular*}
    \vspace{-5mm}
\end{table}

\textbf{Generalization.}
\Cref{tab:generalization_cam_setup} evaluates generalization across Panoptic camera setups with varying camera numbers and placements, testing whether methods adapt to new layouts without retraining.
\textsc{\name} remains stable across setups and consistently outperforms optimization-based and self-supervised baselines.
In sparse 4-view setups such as CMU3, \textsc{\name} achieves 74.43 mAP, versus 59.74 mAP for MvPose~\cite{dong2019fast}.
The BP relaxation closely tracks \name-ILP across these settings, indicating that the scalable relaxation preserves similar generalization behavior.

\begin{figure}[t!]
    \centering

    \subfloat[Hyperedge pruning vs. camera view count.]{
        \includegraphics[width=0.48\textwidth]{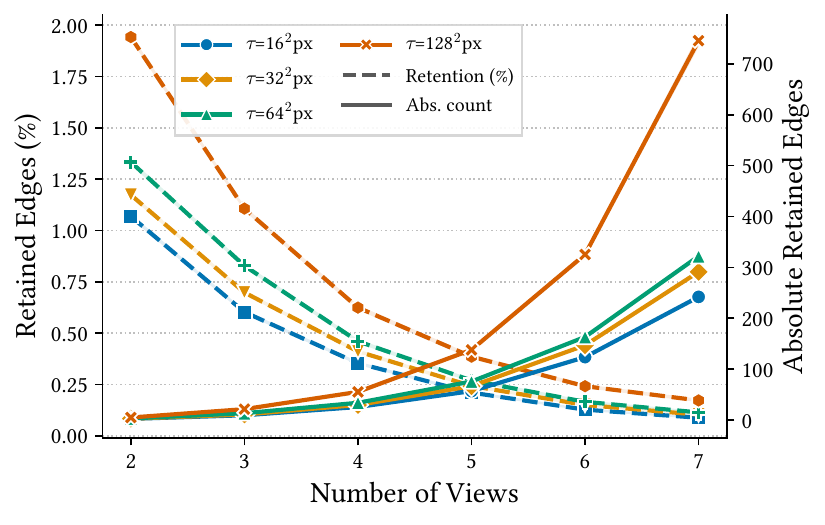}
        \label{fig:sub_left}
    }
    \hfill
    \subfloat[Runtime vs. camera view count.]{
        \includegraphics[width=0.48\textwidth]{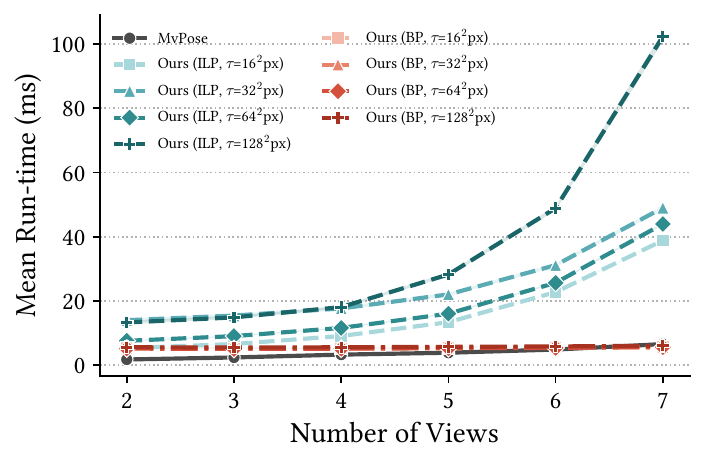}
        \label{fig:sub_right}
    }

    \caption{\textbf{Scalability analysis on Panoptic \cite{panoptic} across varying $\tau$ thresholds.}
    (a) Absolute count and percentage of hyperedges pruned to maintain optimization efficiency.
    (b) Runtime comparison of MvPose \cite{dong2019fast}, \name-ILP, and \name-BP as the number of available cameras increases.}
    \label{fig:runtime_analysis}
    \vspace{-5.0mm}
\end{figure}

\textbf{Scalability and Runtime.} 
\Cref{fig:sub_left} analyzes geometric pruning.
Although possible hyperedges increase with the number of views, only a small fraction satisfies the geometric consistency threshold, yielding a sparse candidate set for optimization.
\Cref{fig:sub_right} reports GH200 inference runtime as the number of cameras and pruning threshold $\tau$ vary.
MvPose scales efficiently via pairwise correspondences, whereas \name-ILP slows as the retained hyperedge set grows, especially for larger $V$ and $\tau$, reflecting branch-and-cut worst-case complexity.
In contrast, \name-BP uses batched GPU message passing and maintains an approx. constant runtime of 5\,ms while closely tracking ILP accuracy.
Runtimes exclude 2D pose detection and measure only association and reconstruction; candidate hyperedge-construction GPU memory is reported in the supplementaries (\cref{app_subsec:hierarchical_hypergraph_construction}).

\section{Concluding Remarks}
\label{sec:conclusion}
\vspace{-2mm}

We present \textsc{\name}, a higher-order combinatorial formulation for multi-view 3D pose estimation, solved at test time without 3D supervision.
By recasting correspondence search as weighted exact-cover over a hypergraph of person hypotheses, our approach enforces multi-view consistency holistically rather than reconciling pairwise matches post-hoc.
We provide two complementary solvers: an Integer Linear Programming formulation and a scalable Belief Propagation relaxation with parallelizable GPU inference and soft association beliefs.
Extensive experiments across three benchmarks show state-of-the-art performance among optimization-based and self-supervised methods, while remaining competitive with fully supervised approaches.
Our proposed method highlights the potential of higher-order graph formulations for multi-view 3D pose estimation, with applications to tracking \cite{wang2025trackor} and team-behavior analysis \cite{wang2025beyond}.
We release the code to facilitate further research.

\noindent\textbf{Limitations and Future Work.} 
\textsc{\name} requires calibrated cameras for triangulation, and cannot reconstruct persons visible in only a single view as there are no other observations to triangulate against. 
The formulation still operates frame-by-frame and does not exploit the temporal information readily available in multi-view recordings. Future work will explore weakly calibrated regimes, monocular priors for single-view recovery, and temporal hyperedges to enable multi-person tracking.

\section*{Acknowledgments}
The authors are grateful for support from the UK AI Research Resource (AIRR) through grant 0251-4584-0945-1 and from the Excellence Strategy of local and state governments in Bavaria, Germany, as well as computational resources of the LRZ AI service infrastructure provided by the Leibniz Supercomputing Center (LRZ), the German Federal Ministry of Education and Research (BMBF), and the Bavarian State Ministry of Science and the Arts (StMWK).
T. B. was supported by the UKRI Engineering and Physical Sciences Research Council (EPSRC) through the Future Leaders Fellowship [grant number MR/Y018818/1].
L.B. was supported by the UK Royal Society through grant NIF/R1/254128.

\bibliographystyle{plainnat}
\bibliography{main.bib}

\newpage
\appendix
\begin{center}
    {\Large\bfseries Supplementary Material}
\end{center}

In this supplementary material, we collect the additional context, derivations, implementation details, and empirical evidence supporting the main manuscript.
We begin with related work in \cref{sec:related_works}, then expand on the ILP--MAP connection and BP relaxation in \cref{app_sec:additional_proofs}; the following sections detail implementation and evaluation protocols (\cref{app_sec:additional_methods}), report extended experiments and ablations (\cref{app_sec:additional_experiments}), and discuss broader societal impact (\cref{app_sec:societal_impact}).

\section{Related Works}
\label{sec:related_works}

Multi-view multi-human 3D pose estimation can be stratified into two distinct paradigms: optimization-based and learning-based approaches \cite{nogueira_MarkerlessMultiview3D_2025}.
Optimization-based methods rely on geometric triangulation and solver-based consensus to lift 2D priors (e.g., from an off-the-shelf 2D pose detector) into 3D space \cite{belagiannis_3DPictorialStructures_2014}.
Conversely, learning-based approaches use deep neural networks to regress 3D poses from visual features, typically requiring annotated datasets \cite{tu_VoxelPoseMulticamera3D_2020}.

\paragraph{Optimization-Based Approaches}
As a pioneering work, Belagiannis et al. introduced \textit{3D Pictorial Structures}, a conditional random field framework that uses a discrete state space and multi-view potential functions to resolve identity and body-part ambiguities across multiple camera views \cite{belagiannis_3DPictorialStructures_2014}. 
Dong et al. \cite{dong2019fast} establish multi-view correspondences by formulating the multi-way matching problem as a convex objective that simultaneously clusters 2D detections across all views using appearance similarity and geometric consistency cues while enforcing a \textit{cycle-consistency} constraint. 
To address occlusions and crowded scenes, Zhou et al. \cite{zhou_QuickPoseRealtimeMultiview_2022} introduce multi-view association at the level of partial skeleton proposals instead of body-level. 
Zhang et al. \cite{zhang_4DAssociationGraph_2020} introduce the temporal dimension to the task and propose a spatio-temporal graph formulation for both spatial and temporal associations. 
These optimization-based methods are efficient and require minimal computational resources. 
However, compared to learnable methods, they struggle with noisy 2D detections and occlusions.

\paragraph{Learning-Based Approaches}
The prevailing body of work leverages deep neural networks to regress 3D human poses. 
Early methods encode the environment as 3D voxel grids, learning with 3D CNNs \cite{iskakov_LearnableTriangulationHuman_2019,tu_VoxelPoseMulticamera3D_2020,chen_3DSAMultiview3D_2025,srivastav_SelfPose3dSelfSupervisedMultiPerson_2024}. 
However, these are computationally expensive due to cubic complexity and often overfit to specific camera setups \cite{liao_MultipleViewGeometry_2024}. 
To address this, recent methods project 3D hypotheses onto 2D image planes to leverage 2D features, improving both speed and flexibility \cite{wang_DirectMultiviewMultiperson_2021,liao_MultipleViewGeometry_2024,chharia_MVSSMMultiViewState_2025}. 
Despite these architectural advances, fully supervised models fundamentally depend on scarce, labor-intensive 3D-annotated data. 
While self-supervised approaches aim to reduce reliance on labeled data, they currently underperform and struggle to generalize to novel environments \cite{srivastav_SelfPose3dSelfSupervisedMultiPerson_2024,liu_DSPDenseSparseParallel_2025}. 
Thus, optimization-based approaches remain a critical alternative, offering robust generalization in diverse settings without requiring costly 3D supervision.

\paragraph{Multi-View Synchronization}
Reconciling locally consistent pairwise associations into a globally coherent structure has historically been treated as a \emph{synchronization problem}, which is defined as the recovery of absolute quantities from a collection of ratios \cite{govindu2014averaging,govindu2004lie,arrigoni2017synchronization}.
Pioneering works use Lie group theory to average rigid motions for \textit{Structure-from-Motion}, extending to SE(3) via spectral decomposition and semidefinite programming, offering closed-form solutions and strong duality guarantees \cite{arrigoni2016spectral}. 
Recent advancements explore probabilistic synchronization \cite{birdal2019probabilistic}, quantum permutation synchronization for non-convex optimization \cite{birdal2021quantum}, and generative synchronization to align multiple joint diffusion processes \cite{lee2023syncdiffusion,kim2024synctweedies}. 
While cycle consistency can be mathematically enforced across pairwise matches, relying solely on synchronizing dyadic relations can propagate errors when detections in individual views are noisy or occluded.

\section{Background}
\label{app_sec:additional_proofs}

We provide further intuition regarding the connection between integer linear programming (ILP), the Gibbs distribution, probabilistic relaxations such as Markov random fields, and the resulting belief propagation algorithm.

\subsection{Integer Linear Programming (ILP)}

Let us first rephrase the ILP in \cref{def:ilp} as a minimization problem:

\begin{align}
    \label{app_eq:ilp}
    \min_{\bx}\quad - &\sum_{e \in \cE}(s(e) - \gamma)\, x_{e} \\
    \text{s.t.}\quad    &\sum_{e:\, u \in e}x_{e}= 1, \quad \forall\, u \in \cU \qquad \text{and} \qquad
                          x_{e}\in \{0, 1\}, \quad \forall\, e \in \cE
\end{align}

\noindent
This can be interpreted as minimizing a corresponding energy function:

\begin{align}
    \label{eq:energy_formulation}
    E(\bx) = \sum_{e \in \cE} c_e x_e, \qquad c_e := \gamma - s(e)
\end{align}

\noindent
where $c_e$ encodes the cost of selecting hyperedge $e$: a high-confidence hyperedge (large $s(e)$) yields a lower cost, making its selection energetically more favorable; the converse is true for low-confidence hyperedges.
The key to relaxing the objective is that any optimization problem of the form $\min_\bx E(\bx)$ can be cast as finding the mode of a corresponding probability distribution, the Gibbs distribution \cite{geman1988stochastic}.

\subsection{Gibbs Energy Formulation}

Given our energy function in \cref{eq:energy_formulation}, we now define the Gibbs distribution (also called the Boltzmann distribution \cite{murphy2012machine}) as:

\begin{equation}
    \label{eq:gibbs_supp}
    p_\beta(\bx) = \frac{1}{Z(\beta)} \exp\bigl(-\beta\, E(\bx)\bigr),
\end{equation}

\noindent
where $\beta > 0$ is an \emph{inverse temperature} parameter and $Z(\beta) = \sum_{\bx} \exp(-\beta\, E(\bx))$ is a \emph{partition function} to normalize the probability distribution \cite{koller2009probabilistic}.

The inverse temperature $\beta$ controls the concentration of the distribution around low-energy states.
In the \emph{high-temperature regime} ($\beta \to 0$), the distribution becomes uniform over all configurations, assigning equal probability regardless of energy.
In the \emph{low-temperature regime} ($\beta \to \infty$), the distribution concentrates its mass on the global energy minimizer(s):
\begin{equation}
    \label{eq:gibbs_limit}
    \lim_{\beta \to \infty} p_\beta(\bx) = 
    \begin{cases} 
        \frac{1}{|\mathcal{X}^*|} & \text{if } \bx \in \mathcal{X}^*, \\ 
        0 & \text{otherwise},
    \end{cases}
\end{equation}

\noindent
where $\mathcal{X}^* = \arg\min_{\bx} E(\bx)$ is the set of optimal configurations \cite{koller2009probabilistic}.
Consequently, for any $\beta > 0$, the \emph{maximum a posteriori} (MAP) estimate $\arg\max_{\bx}\, p_\beta(\bx)$ coincides with $\arg\min_{\bx}\, E(\bx)$, since $\beta$ is a positive scalar.
This relationship guarantees that solving the ILP is equivalent to MAP inference under the corresponding Gibbs distribution.

\begin{figure}[t!]
    \centering
    \resizebox{\textwidth}{!}{%
    \begin{tikzpicture}[
        varnode/.style={circle, draw=black, fill=blue!15, minimum size=20pt, inner sep=0pt, font=\small},
        constfactor/.style={rectangle, draw=black, fill=red!25, minimum size=14pt, inner sep=2pt, font=\scriptsize},
        unaryfactor/.style={rectangle, draw=black, fill=green!20, minimum size=11pt, inner sep=1pt, font=\scriptsize},
        detection/.style={circle, draw, fill=blue!40, minimum size=15pt, inner sep=0pt, font=\scriptsize},
        fedge/.style={thick, draw=black!70},
        viewlabel/.style={font=\footnotesize\bfseries, text=gray},
    ]

    \node[viewlabel] at (-4.5, 3.8) {Hypergraph};

    \node[viewlabel, text=blue!70!black] at (-6.2, 2.8) {View 1};
    \node[viewlabel, text=blue!70!black] at (-4.5, 2.8) {View 2};
    \node[viewlabel, text=blue!70!black] at (-2.8, 2.8) {View 3};

    \node[detection] (u11) at (-6.2, 1.8) {$u_1^1$};
    \node[detection] (u21) at (-6.2, 0.4) {$u_2^1$};
    \node[detection] (u12) at (-4.5, 1.8) {$u_1^2$};
    \node[detection] (u22) at (-4.5, 0.4) {$u_2^2$};
    \node[detection] (u13) at (-2.8, 1.8) {$u_1^3$};
    \node[detection] (u23) at (-2.8, 0.4) {$u_2^3$};

    \begin{scope}[on background layer]
        \fill[orange!20, rounded corners=8pt] 
            ($(u11) + (-0.35, 0.30)$) -- 
            ($(u13) + (0.35, 0.30)$) -- 
            ($(u13) + (0.35, -0.30)$) -- 
            ($(u11) + (-0.35, -0.30)$) -- cycle;
    \end{scope}
    \node[font=\scriptsize, text=orange!70!black] at (-4.5, 2.4) {$e_1$};

    \begin{scope}[on background layer]
        \fill[purple!15, rounded corners=8pt] 
            ($(u21) + (-0.35, 0.30)$) -- 
            ($(u23) + (0.35, 0.30)$) -- 
            ($(u23) + (0.35, -0.30)$) -- 
            ($(u21) + (-0.35, -0.30)$) -- cycle;
    \end{scope}
    \node[font=\scriptsize, text=purple!70!black] at (-4.5, -0.2) {$e_2$};

    \begin{scope}[on background layer]
        \draw[teal!50, thick, rounded corners=4pt, fill=teal!8] 
            ($(u11) + (-0.4, 0.4)$) -- 
            ($(u11) + (0.4, 0.4)$) --
            ($(u22) + (0.4, 0.3)$) -- 
            ($(u22) + (0.4, -0.3)$) -- 
            ($(u22) + (-0.4, -0.3)$) --
            ($(u11) + (-0.4, -0.3)$) -- cycle;
    \end{scope}
    \node[font=\scriptsize, text=teal!70!black] at (-5.2, 1.0) {$e_3$};

    \draw[->, ultra thick, gray] (-1.6, 1.1) -- (-0.4, 1.1);

    \node[viewlabel] at (3.0, 3.8) {Factor Graph};

    \node[unaryfactor] (fe1) at (1.5, 3.2) {$f_{e_1}$};
    \node[unaryfactor] (fe2) at (3.0, 3.2) {$f_{e_2}$};
    \node[unaryfactor] (fe3) at (4.5, 3.2) {$f_{e_3}$};

    \node[varnode] (xe1) at (1.5, 2.0) {$x_{e_1}$};
    \node[varnode] (xe2) at (3.0, 2.0) {$x_{e_2}$};
    \node[varnode] (xe3) at (4.5, 2.0) {$x_{e_3}$};

    \draw[fedge] (fe1) -- (xe1);
    \draw[fedge] (fe2) -- (xe2);
    \draw[fedge] (fe3) -- (xe3);

    \node[constfactor] (gu11) at (0.5, 0.4) {$g_{u_1^1}$};
    \node[constfactor] (gu21) at (1.4, 0.4) {$g_{u_2^1}$};
    \node[constfactor] (gu12) at (2.3, 0.4) {$g_{u_1^2}$};
    \node[constfactor] (gu22) at (3.2, 0.4) {$g_{u_2^2}$};
    \node[constfactor] (gu13) at (4.1, 0.4) {$g_{u_1^3}$};
    \node[constfactor] (gu23) at (5.0, 0.4) {$g_{u_2^3}$};

    \draw[fedge, orange!70!black] (xe1) -- (gu11);
    \draw[fedge, orange!70!black] (xe1) -- (gu12);
    \draw[fedge, orange!70!black] (xe1) -- (gu13);

    \draw[fedge, purple!60!black] (xe2) -- (gu21);
    \draw[fedge, purple!60!black] (xe2) -- (gu22);
    \draw[fedge, purple!60!black] (xe2) -- (gu23);

    \draw[fedge, teal!70!black] (xe3) -- (gu11);
    \draw[fedge, teal!70!black] (xe3) -- (gu22);

    \node[varnode, minimum size=12pt] (leg1) at (-0.8, -0.8) {};
    \node[font=\scriptsize, right=2pt] at (leg1.east) {Variable ($x_e$)};

    \node[constfactor, minimum size=10pt] (leg2) at (1.6, -0.8) {};
    \node[font=\scriptsize, right=2pt] at (leg2.east) {Constraint ($g_u$)};

    \node[unaryfactor, minimum size=9pt] (leg3) at (4.2, -0.8) {};
    \node[font=\scriptsize, right=2pt] at (leg3.east) {Unary ($f_e$)};

    \end{tikzpicture}%
    }%
    \caption{%
        \textbf{From hypergraph to factor graph.}
        \emph{Left:} A multi-view hypergraph with six detections across three views and three candidate hyperedges ($e_1$, $e_2$: cross-view correspondences; $e_3$: a partial hypothesis between two views).
        \emph{Right:} The corresponding factor graph for belief propagation.
        Each hyperedge becomes a binary variable node $x_e$, each detection induces a constraint factor $g_u$ enforcing exclusive assignment, and unary factors $f_e$ encode hyperedge plausibility.
        Variables are coupled when their hyperedges share a detection (e.g., $x_{e_1}$ and $x_{e_3}$ via $g_{u_1^1}$).
    }
    \label{fig:factor_graph}
\end{figure}

\begin{proposition}[ILP--MAP Equivalence]
    \label{prop:ilp_map_supp}
    Let $E(\bx) = \sum_{e \in \cE} (\gamma - s(e))\, x_e$ be the energy function corresponding to the negated ILP objective, and let $\mathcal{F} = \{ \bx \in \{0,1\}^{|\cE|} \mid \sum_{e:\, u \in e} x_e = 1,\; \forall\, u \in \cU \}$ denote the feasible set.
    Define the constrained Gibbs distribution:
    \begin{equation}
        \label{eq:constrained_gibbs}
        p_\beta(\bx) = \frac{1}{Z(\beta)} \exp\bigl(-\beta\, E(\bx)\bigr) \cdot \mathds{1}[\bx \in \mathcal{F}],
    \end{equation}
    where $\mathds{1}[\cdot]$ is the indicator function.
    Then, for any $\beta > 0$:
    \begin{equation}
        \argmax_{\bx}\; p_\beta(\bx) = \argmin_{\bx \in \mathcal{F}}\; \sum_{e \in \cE} (\gamma - s(e))\, x_e = \argmax_{\bx \in \mathcal{F}}\; \sum_{e \in \cE} (s(e) - \gamma)\, x_e.
    \end{equation}
    That is, MAP inference under $p_\beta$ recovers the optimal solution of the ILP in \cref{def:ilp}.
\end{proposition}

\begin{proof}
    The indicator $\mathds{1}[\bx \in \mathcal{F}]$ restricts the support to feasible configurations, and since $\log$ is strictly monotone and $\beta > 0$ is a positive constant, we have $\argmax_{\bx}\, p_\beta(\bx) = \argmin_{\bx \in \mathcal{F}}\, E(\bx) = \argmax_{\bx \in \mathcal{F}}\, \sum_{e \in \cE}(s(e) - \gamma)\, x_e$, recovering the ILP in \cref{def:ilp}.
    This classical equivalence underlies a broad family of methods bridging combinatorial optimization and probabilistic inference \cite{kirkpatrick1983optimization,geman1984stochastic}.
\end{proof}

\subsection{Markov Random Fields and Factor Graphs}
\label{app_subsec:mrfs}

We briefly review the graphical model formalism underlying our belief propagation solver, deferring to the seminal works of Pearl \cite{pearl2014probabilistic}, Yedidia et al.\ \cite{yedidia2001bethe,yedidia2003understanding}, and Wainwright \& Jordan \cite{wainwright2008graphical} for a thorough treatment.

A \emph{Markov random field} (MRF) is an undirected graphical model in which a joint distribution factorizes over the cliques of a graph \cite{pearl2014probabilistic}:
\begin{equation}
    \label{eq:mrf}
    p(\bx) = \frac{1}{Z} \prod_{c \in \mathcal{C}} \psi_c(\bx_c),
\end{equation}
where $\psi_c$ are non-negative potential functions and $Z$ is the partition function.
This factorization can be made explicit through a \emph{factor graph} \cite{kschischang2001factor}, a bipartite graph of variable nodes and factor nodes, where each factor $\psi_c$ connects to exactly those variables it depends on.
Factor graphs provide the natural domain for message-passing algorithms such as belief propagation \cite{yedidia2003understanding}.

Our Gibbs distribution in \cref{eq:constrained_gibbs} admits precisely this structure.
Specifically, it factorizes into \emph{unary factors} $f_e(x_e)$, encoding the plausibility of each hyperedge, and \emph{constraint factors} $g_u(\bx_{\cE(u)})$, enforcing that each detection is explained exactly once:
\begin{equation}
    \label{eq:gibbs_factored}
    p_\beta(\bx) = \frac{1}{Z(\beta)} \prod_{e \in \cE} f_e(x_e) \prod_{u \in \cU} g_u\!\bigl(\bx_{\cE(u)}\bigr).
\end{equation}
Two hyperedge variables are coupled if and only if they share a detection node, making the graph sparse.
Moreover, this sparsity is further amplified by geometric pruning, which eliminates most candidate hyperedges, leaving only a few variable pairs to interact through shared detections.
\Cref{fig:factor_graph} illustrates this construction on a small example with three views, two detections per view, and three candidate hyperedges.
Each hyperedge $e$ becomes a binary variable node $x_e$, each detection $u$ induces a constraint factor $g_u$, and unary factors $f_e$ sit atop each variable.
Notably, $x_{e_1}$ and $x_{e_3}$ are coupled through $g_{u_1^1}$ because both hypotheses claim the same detection.
The constraint factor enforces that at most one can be selected.

\paragraph{Unary (plausibility) factors}
To encode the ILP objective in the Gibbs framework, we define each unary factor as:
\begin{equation}
\label{app_eq:unary}
  f_e(x_e) := \exp\bigl(\beta\,(s(e)-\gamma)\,x_e\bigr),
\end{equation}
so that $f_e(0) = 1$ and $f_e(1) = \exp(\beta(s(e)-\gamma))$.
The resulting log-potential ratio is:
\begin{equation}
\label{eq:logpot}
  \phi_e \coloneqq \log\frac{f_e(1)}{f_e(0)} = \beta\,(s(e)-\gamma),
\end{equation}
which directly encodes the ILP coefficient scaled by the inverse temperature $\beta$.

\paragraph{Constraint factors}
The ILP feasibility constraints are enforced by:
\begin{equation}\label{app_eq:exclusion_exact}
  g_u\bigl(\bx_{\cE(u)}\bigr)
  =
  \begin{cases}
    1, & \sum_{e \in \cE(u)} x_e = 1,\\[2pt]
    0, & \text{otherwise},
  \end{cases}
\end{equation}
which assigns zero probability to any configuration in which a detection is left uncovered or multiply assigned.
Together with \cref{app_eq:unary}, MAP inference under \cref{eq:gibbs_factored} recovers the ILP solution, as established in \cref{prop:ilp_map_supp}.

\subsection{Constraint Relaxation for Belief Propagation (BP)}
\label{app_subsec:relaxation}

The hard equality constraint in \cref{app_eq:exclusion_exact} assigns zero probability to any configuration where a detection is uncovered ($\sum_{e \in \cE(u)} x_e = 0$) or multiply assigned.
To effectively apply loopy belief propagation, we must relax these to soft penalties, avoiding $\log 0$ and resulting numerical instabilities \cite{wainwright2008graphical}.

To obtain a well-behaved message-passing scheme, we relax the constraint factor to the ``at-most-one'' form with a tunable penalty $\eta \geq 0$ for uncovered detections, as introduced in the main paper (\cref{eq:exclusion}):
\begin{equation}\label{eq:exclusion_relaxed_supp}
  g_u^{(\eta)}\!\bigl(\bx_{\cE(u)}\bigr)
  =
  \mathds{1}\!\left[\sum_{e\in\cE(u)}x_e\le 1\right]
  \exp\!\left(
    -\eta\;\mathds{1}\!\left[\sum_{e\in\cE(u)}x_e=0\right]
  \right).
\end{equation}
This relaxation replaces the hard zero with a smooth penalty: uncovered detections are discouraged but not forbidden, providing a continuous landscape that BP can navigate.
Setting $\eta = 0$ yields the pure exclusion constraint used in the BP updates derived in the main paper, while $\eta \to \infty$ recovers the exact coverage constraint in \cref{app_eq:exclusion_exact}.
We include \cref{fig:bp_messages} for a conceptual illustration of how BP works on the factor graph.

\begin{figure}[t]
    \centering
    \resizebox{\textwidth}{!}{%
    \begin{tikzpicture}[
        varnode/.style={circle, draw=black, fill=blue!15, minimum size=20pt, inner sep=0pt, font=\small},
        varnode_hl/.style={circle, draw=black, fill=blue!35, minimum size=20pt, inner sep=0pt, font=\small, line width=1.5pt},
        constfactor/.style={rectangle, draw=black, fill=red!25, minimum size=14pt, inner sep=2pt, font=\scriptsize},
        constfactor_hl/.style={rectangle, draw=black, fill=red!45, minimum size=14pt, inner sep=2pt, font=\scriptsize, line width=1.5pt},
        unaryfactor/.style={rectangle, draw=black, fill=green!20, minimum size=11pt, inner sep=1pt, font=\scriptsize},
        unaryfactor_hl/.style={rectangle, draw=black, fill=green!40, minimum size=11pt, inner sep=1pt, font=\scriptsize, line width=1.5pt},
        fedge/.style={thick, draw=black!20},
        msg_in/.style={-{Stealth[length=5pt]}, very thick, draw=purple!70!black},
        msg_out/.style={-{Stealth[length=7pt]}, line width=2.5pt, draw=orange!80!black},
        viewlabel/.style={font=\footnotesize\bfseries, text=gray},
    ]

    \node[viewlabel] at (3.0, 3.8) {Variable $\to$ Factor};

    \node[unaryfactor_hl] (fe1L) at (1.5, 3.2) {$f_{e_1}$};
    \node[unaryfactor] (fe2L) at (3.0, 3.2) {$f_{e_2}$};
    \node[unaryfactor] (fe3L) at (4.5, 3.2) {$f_{e_3}$};

    \node[varnode_hl] (xe1L) at (1.5, 2.0) {$x_{e_1}$};
    \node[varnode] (xe2L) at (3.0, 2.0) {$x_{e_2}$};
    \node[varnode] (xe3L) at (4.5, 2.0) {$x_{e_3}$};

    \draw[fedge] (fe2L) -- (xe2L);
    \draw[fedge] (fe3L) -- (xe3L);
    \draw[msg_in, dashed] (fe1L) -- (xe1L);

    \node[constfactor_hl] (gu11L) at (0.5, 0.4) {$g_{u_1^1}$};
    \node[constfactor] (gu21L) at (1.4, 0.4) {$g_{u_2^1}$};
    \node[constfactor] (gu12L) at (2.3, 0.4) {$g_{u_1^2}$};
    \node[constfactor] (gu22L) at (3.2, 0.4) {$g_{u_2^2}$};
    \node[constfactor] (gu13L) at (4.1, 0.4) {$g_{u_1^3}$};
    \node[constfactor] (gu23L) at (5.0, 0.4) {$g_{u_2^3}$};

    \draw[fedge] (xe1L) -- (gu12L);
    \draw[fedge] (xe1L) -- (gu13L);
    \draw[fedge] (xe2L) -- (gu21L);
    \draw[fedge] (xe2L) -- (gu22L);
    \draw[fedge] (xe2L) -- (gu23L);
    \draw[fedge] (xe3L) -- (gu11L);
    \draw[fedge] (xe3L) -- (gu22L);

    \draw[msg_in] (gu12L) -- (xe1L);
    \draw[msg_in] (gu13L) -- (xe1L);

    \draw[msg_out] (xe1L) -- (gu11L);

    \node[viewlabel] at (9.5, 3.8) {Factor $\to$ Variable};

    \node[unaryfactor] (fe1R) at (8.0, 3.2) {$f_{e_1}$};
    \node[unaryfactor] (fe2R) at (9.5, 3.2) {$f_{e_2}$};
    \node[unaryfactor] (fe3R) at (11.0, 3.2) {$f_{e_3}$};

    \node[varnode_hl] (xe1R) at (8.0, 2.0) {$x_{e_1}$};
    \node[varnode] (xe2R) at (9.5, 2.0) {$x_{e_2}$};
    \node[varnode] (xe3R) at (11.0, 2.0) {$x_{e_3}$};

    \draw[fedge] (fe1R) -- (xe1R);
    \draw[fedge] (fe2R) -- (xe2R);
    \draw[fedge] (fe3R) -- (xe3R);

    \node[constfactor_hl] (gu11R) at (7.0, 0.4) {$g_{u_1^1}$};
    \node[constfactor] (gu21R) at (7.9, 0.4) {$g_{u_2^1}$};
    \node[constfactor] (gu12R) at (8.8, 0.4) {$g_{u_1^2}$};
    \node[constfactor] (gu22R) at (9.7, 0.4) {$g_{u_2^2}$};
    \node[constfactor] (gu13R) at (10.6, 0.4) {$g_{u_1^3}$};
    \node[constfactor] (gu23R) at (11.5, 0.4) {$g_{u_2^3}$};

    \draw[fedge] (xe1R) -- (gu12R);
    \draw[fedge] (xe1R) -- (gu13R);
    \draw[fedge] (xe2R) -- (gu21R);
    \draw[fedge] (xe2R) -- (gu22R);
    \draw[fedge] (xe2R) -- (gu23R);
    \draw[fedge] (xe3R) -- (gu22R);

    \draw[msg_in] (xe3R) -- (gu11R);

    \draw[msg_out] (gu11R) -- (xe1R);

    \draw[msg_out] (1.5, -0.8) -- ++(1.0, 0);
    \node[font=\scriptsize, right=4pt] at (2.5, -0.8) {Computed message};

    \draw[msg_in] (5.5, -0.8) -- ++(1.0, 0);
    \node[font=\scriptsize, right=4pt] at (6.5, -0.8) {Incoming message};

    \draw[fedge] (9.5, -0.8) -- ++(1.0, 0);
    \node[font=\scriptsize, right=4pt] at (10.5, -0.8) {Inactive edge};

    \end{tikzpicture}%
    }%
    \caption{%
        \textbf{Belief propagation message passing on the factor graph.}
        \emph{Left:} A variable-to-factor message from $x_{e_1}$ to $g_{u_1^1}$.
        The message aggregates the unary log-potential from $f_{e_1}$ (dashed) and incoming messages from all other connected constraint factors.
        \emph{Right:} A factor-to-variable message from $g_{u_1^1}$ back to $x_{e_1}$.
        The constraint factor collects the incoming message from the competing hypothesis $x_{e_3}$ and sends a penalty reflecting how strongly $e_3$ claims the shared detection.
        Highlighted nodes are directly involved in the computation; dimmed edges are inactive.
    }
    \label{fig:bp_messages}
\end{figure}

\section{Additional Methodological and Implementation Details}
\label{app_sec:additional_methods}

\subsection{Normalized and Confidence-Weighted Hyperedge Compatibility}
\label{app_sec:hyperedge_score_normalization}

In the main manuscript (see \cref{subsec:hyperedge_scoring}), we define the hyperedge compatibility score as
\[
    s(e) = \exp(-\lambda \cdot C(e)),
\]
where $C(e)$ measures the geometric inconsistency of the detections grouped by hyperedge $e$.
In practice, we normalize this cost by the number of contributing joints and views, so that scores remain comparable across hyperedges of different cardinalities.
For the reprojection-based score, this corresponds to averaging the squared reprojection residuals over all scored joints and all detections contained in the hyperedge.

We additionally use the 2D detector confidence scores when computing the geometric cue. Let
$c^v_{i,j}\in[0,1]$ denote the confidence of joint $j$ in detection $U_i^v$. 
The reprojection cost is computed as a confidence-weighted average,
  \[
      \mathcal{C}(e)
      =
      \frac{
      \sum_{j}\sum_{U_i^v \in e} c^v_{i,j}\,
      \big\|\pi_v(\hat{\mathbf{y}}_j(e)) - \mathbf{u}^v_{i,j}\big\|^2
      }{
      \sum_{j}\sum_{U_i^v \in e} c^v_{i,j} + \epsilon
      },
  \]

where $\epsilon$ is a small constant for numerical stability. 
Thus, unreliable joints contribute less to the hyperedge score, while the normalization prevents larger hyperedges from being penalized solely because they contain more observations. 
The resulting compatibility score $s(e)$ is used both for pruning geometrically implausible hyperedges and as the unary score in the ILP and BP objectives.

\subsection{Confidence-Weighted Triangulation}
\label{app_sec:weighted_triangulation}

After correspondence estimation, each selected non-singleton hyperedge $e\in\cE^\star$ is converted into a 3D pose by triangulating its corresponding 2D joints.
In the main manuscript, we denote this triangulated joint by $\hat{\mathbf{y}}_j(e)$.
In practice, we use the confidence scores produced by the off-the-shelf 2D pose detector to perform confidence-weighted triangulation \cite{iskakov_LearnableTriangulationHuman_2019,dong2019fast}.

Let $c_{i,j}^v\in[0,1]$ denote the confidence of joint $j$ in detection $U_i^v$.
For a selected hyperedge $e$ and joint $j$, we triangulate $\hat{\mathbf{y}}_j(e)\in\mathbb{R}^3$ from the observations $\{\mathbf{u}_{i,j}^v : U_i^v\in e\}$ by solving the weighted least-squares problem
\[
    \hat{\mathbf{y}}_j(e)
    =
    \argmin_{\mathbf{y}\in\mathbb{R}^3}
    \sum_{U_i^v\in e}
    w_{i,j}^v
    \left\|
        \pi_v(\mathbf{y}) - \mathbf{u}_{i,j}^v
    \right\|^2,
\]
where $w_{i,j}^v$ is derived from the detector confidence $c_{i,j}^v$.
In our implementation, we set $w_{i,j}^v = c_{i,j}^v$ and solve the corresponding weighted DLT system.

To reduce the influence of occasional outlier views, we apply a simple reprojection-error-based robustification step \cite{hartley2003multiple,bartol2022generalizable}.
After an initial confidence-weighted triangulation, we compute the per-view reprojection residual
\[
    r_{i,j}^v(e)
    =
    \left\|
        \pi_v(\hat{\mathbf{y}}_j(e)) - \mathbf{u}_{i,j}^v
    \right\|^2 .
\]
If at least three views are available for joint $j$, we discard the observation with the largest residual, and recompute $\hat{\mathbf{y}}_j(e)$ from the remaining observations.

\subsection{BP Update Equations}
\label{app_subsec:bp_updates}

With $\eta=0$ and unary log-potential
$\phi_e \coloneqq \beta\,(s(e)-\gamma)$, the variable-to-factor and
factor-to-variable log-ratio updates are
\begin{align}
  m_{e \to u} &= \phi_e + \sum_{u' \in e \setminus \{u\}} n_{u' \to e},
  \label{eq:app_m_update}
  \\
  n_{u \to e} &= -\log\Biggl(
    1 + \sum_{e' \in \cE(u) \setminus \{e\}}
    \exp\bigl(m_{e' \to u}\bigr)
  \Biggr).
  \label{eq:app_n_update}
\end{align}

\subsection{Probabilistic Relaxation via Belief Propagation}
\label{app_sec:bp}

We solve the relaxed factor-graph formulation using damped loopy belief propagation.
\Cref{alg:simple_bp} summarizes the procedure.
We iteratively pass messages to estimate soft hyperedge beliefs $b_e$, and then greedily decode the final beliefs into a valid non-overlapping discrete selection.
In practice, we terminate once the beliefs stabilize, i.e., when
$\max_{e\in\mathcal E}|b_e^{t+1}-b_e^t|<\varepsilon$.
Empirically, on the CMU Panoptic \cite{panoptic} dataset and with $\varepsilon=10^{-3}$, the algorithm converges within three iterations for all evaluated instances.
In \cref{alg:simple_bp}, we set $\gamma = 2.5$, $\beta = 0.5$, and $\alpha = 0.25$ for all experiments.

\begin{algorithm}[h!t]
\caption{Belief Propagation for Hypergraph Matching}
\label{alg:simple_bp}
\KwIn{Hypergraph incidences $(u,e)$, hyperedge scores $s(e)$, sparsity penalty $\gamma$, inverse temperature $\beta$, damping $\alpha$, maximum iterations $T$, tolerance $\varepsilon$.}
\KwOut{Beliefs $b_e$ and discrete selection $\bx$.}

Initialize variable-to-factor messages $m_{e\to u}^{0} \gets 0$\;
Initialize factor-to-variable messages $n_{u\to e}^{0} \gets 0$\;
Set log-potentials $\phi_e \gets \beta\,(s(e) - \gamma)$\;

\For{$t=0,\dots,T-1$}{
    \tcp{Variable-to-factor update}
    \ForEach{incidence $(u,e)$}{
        \[
        \tilde{m}_{e\to u}^{t+1}
        \gets
        \phi_e + \sum_{u' \in e \setminus \{u\}} n_{u'\to e}^{t}
        \]
        \[
        m_{e\to u}^{t+1}
        \gets
        \alpha \tilde{m}_{e\to u}^{t+1} + (1-\alpha)m_{e\to u}^{t}
        \]
    }

    \tcp{Factor-to-variable update}
    \ForEach{incidence $(u,e)$}{
        \eIf{sum-product}{
            \[
            \tilde{n}_{u\to e}^{t+1}
            \gets
            -\log\!\left(1 + \sum_{e' \in \cE(u),\; e'\neq e} \exp(m_{e'\to u}^{t+1})\right)
            \]
        }{
            \[
            \tilde{n}_{u\to e}^{t+1}
            \gets
            -\max\!\left(0,\max_{e' \in \cE(u),\; e'\neq e}m_{e'\to u}^{t+1}\right)
            \]
        }
        \[
        n_{u\to e}^{t+1}
        \gets
        \alpha \tilde{n}_{u\to e}^{t+1} + (1-\alpha)n_{u\to e}^{t}
        \]
    }

    \tcp{Belief computation}
    \ForEach{hyperedge $e$}{
        \[
        b_e^{t+1} \gets \sigma\!\left(\phi_e + \sum_{u\in e}n_{u\to e}^{t+1}\right)
        \]
    }

    \If{$\max_e |b_e^{t+1} - b_e^t| < \varepsilon$}{
        \textbf{break}
    }
}

\tcp{Greedy decoding}
$\bx \gets \textsc{GreedyRounding}(b)$\;

\Return{$b, \bx$}\;
\end{algorithm}

\subsection{2D Pose Estimator}
\label{app_sec:2d_pose_estimator}

For our top-down 2D pose estimation pipeline, we use ViTPose++ \cite{xu2022vitpose} with the \texttt{ViTPose-plus-huge} variant for pose estimation, together with RT-DETRv2 \cite{lv2024rt} (\texttt{r101} variant) for person detection.
Both models are initialized from their official pretrained weights.

In \cref{app_sec:ablation_2d_pose_prior}, we present experimental results analyzing the influence of various variants of the 2D person detector and 2D human pose estimator on overall performance.
We also provide results using the default detectors used to create the 2D human pose labels in SelfPose3d \cite{srivastav_SelfPose3dSelfSupervisedMultiPerson_2024}.

\subsection{ILP Solver}
\label{app_sec:ilp_solver}

To solve the optimization problem using Integer Linear Programming (ILP), we implement our framework using the open-source Python library \texttt{PuLP} \cite{mitchell2011pulp}. 
The underlying optimization process is driven by PuLP's \texttt{Coin-or branch and cut} (CBC) solver \cite{forrest2005cbc}.

\subsection{Hyperparameter Details}
\label{app_sec:hyperparameter_details}

\Cref{app_tab:hyperparameters} summarizes the method hyperparameters $(\gamma,\lambda,\tau,\ldots)$ together with the BP-specific solver settings used in our experiments across the CMU Panoptic \cite{panoptic}, Shelf \cite{belagiannis_3DPictorialStructures_2014}, and Campus \cite{belagiannis_3DPictorialStructures_2014} datasets.
We apply a consistent set of parameters across datasets and camera arrangements.
The only exception is the Campus dataset \cite{belagiannis_3DPictorialStructures_2014}, where a larger $\tau$ threshold is implemented to compensate for less accurate 2D pose detections, a consequence of the dataset's outdoor environment and lower image resolution.

\begin{table}[h!t]
\centering
\caption{Summary of hyperparameters utilized for the CMU Panoptic \cite{panoptic}, Shelf \cite{belagiannis_3DPictorialStructures_2014}, and Campus \cite{belagiannis_3DPictorialStructures_2014} datasets.}
\label{app_tab:hyperparameters}
\begin{tabular*}{\textwidth}{@{\extracolsep{\fill}}lccc@{}}
\toprule
Hyperparameter   & Panoptic  & Shelf     & Campus    \\ 
\midrule
$\gamma$           & 2.5       & 2.5       & 2.5       \\
$\lambda$          & 0.01      & 0.01      & 0.01      \\
$\tau$             & $(32\,\mathrm{px})^2$  & $(32\,\mathrm{px})^2$  & $(64\,\mathrm{px})^2$  \\
$\beta$ (BP)       & 0.5       & 0.5       & 0.5       \\
$\alpha$ (BP)      & 0.25      & 0.25      & 0.25      \\
Max. BP iterations & 10        & 10        & 10        \\
$\varepsilon$ (BP) & $10^{-3}$ & $10^{-3}$ & $10^{-3}$ \\
\bottomrule
\end{tabular*}
\end{table}

\subsection{Dataset Details}
\label{app_sec:evaluation_details}

\paragraph{CMU Panoptic}
Aligning with established evaluation protocols, we partition the CMU Panoptic dataset \cite{panoptic} identically to previous methodologies \cite{tu_VoxelPoseMulticamera3D_2020, ye_FasterVoxelPoseRealtime_2022, wu_GraphBased3DMultiPerson_2021, srivastav_SelfPose3dSelfSupervisedMultiPerson_2024, liao_MultipleViewGeometry_2024}. 
Hyperparameter tuning is conducted using a specific subset of sequences: \texttt{160422\_ultimatum1}, \texttt{160224\_haggling1}, \texttt{160226\_haggling1}, \texttt{161202\_haggling1}, \texttt{160906\_ian1}, \texttt{160906\_ian2}, \texttt{160906\_ian3}, \texttt{160906\_band1}, and \texttt{160906\_band2}. 
The final assessment is performed on the sequences: \texttt{160422\_haggling1}, \texttt{160906\_pizza1}, \texttt{160906\_ian5}, and \texttt{160906\_band4}.
Consistent with prior works, we extract every 12th frame from these test sequences, yielding 2,580 frames for the final evaluation.

\paragraph{Shelf Dataset}
Evaluation on the Shelf dataset \cite{belagiannis_3DPictorialStructures_2014} is conducted using frames from frame index \texttt{300} to \texttt{600}, which mirrors standard conventions established by recent literature \cite{tu_VoxelPoseMulticamera3D_2020, ye_FasterVoxelPoseRealtime_2022, wu_GraphBased3DMultiPerson_2021, srivastav_SelfPose3dSelfSupervisedMultiPerson_2024}.

\paragraph{Campus Dataset}
For the Campus dataset \cite{belagiannis_3DPictorialStructures_2014}, we also adhere to the standard benchmark splits \cite{tu_VoxelPoseMulticamera3D_2020, ye_FasterVoxelPoseRealtime_2022, wu_GraphBased3DMultiPerson_2021, srivastav_SelfPose3dSelfSupervisedMultiPerson_2024}. 
The model is evaluated on frames \texttt{350} through \texttt{470}, as well as \texttt{650} through \texttt{750}. 

\subsection{Camera Configurations on CMU Panoptic}
\label{app_subsec:panoptic_generalization}

We assess the impact of varying camera configurations in our generalization experiments, utilizing the CMU Panoptic dataset \cite{panoptic}. 
We evaluate both the baseline models and our proposed approach under different camera arrangements and varying camera counts.

For these tests, we adopt the experimental framework introduced by Liao et al. (MVGFormer) \cite{liao_MultipleViewGeometry_2024}, a standard subsequently also used by Chharia et al. (MV-SSM) \cite{chharia_MVSSMMultiViewState_2025}. 
The specific camera permutations deployed in our study are detailed in \cref{tab:camera_setups}. 
For the \texttt{CMU4} configuration, we select the first four cameras, rather than the complete set of ten originally used in MVGFormer \cite{liao_MultipleViewGeometry_2024} to provide more settings with fewer cameras.

\begin{table}[ht]
    \caption{Specification of camera setups used in our generalization studies. 
    We list the specific Camera IDs alongside the total number of views for each configuration on the CMU Panoptic dataset \cite{panoptic}.}
    \label{tab:camera_setups}
    \centering
    \begin{tabularx}{\textwidth}{@{} l X c @{}}
        \toprule
        \textbf{Setup Name} & \textbf{Camera IDs} & \textbf{\# Views} \\
        \midrule
        CMU0 & 3, 6, 12, 13, 23 & 5 \\
        CMU0 w/ 2 extra & 3, 6, 12, 13, 23, 10, 16 & 7 \\
        CMU0($K$) & First $K$ cameras from ``CMU0 w/ 2 extra'' & $K$ \\
        \midrule
        CMU1 & 1, 2, 3, 4, 6, 7, 10 & 7 \\
        CMU2 & 12, 16, 18, 19, 22, 23, 30 & 7 \\
        CMU3 & 10, 12, 16, 18 & 4 \\
        CMU4 & 6, 7, 10, 12 & 4 \\
        \bottomrule
    \end{tabularx}
\end{table}

\subsection{Evaluation Metrics}
\label{app_sec:evaluation_metrics}

We evaluate the quantitative performance of our 3D pose estimation framework using three primary metrics: Mean Per-Joint Position Error (MPJPE), Average Precision (AP), and Percentage of Correct Parts (PCP).
These are the standard metrics used in each dataset.

\begin{itemize}[leftmargin=*,itemsep=0.2em]
    \item \textbf{MPJPE (Mean Per Joint Position Error):}
          MPJPE computes the Euclidean distance (reported in millimeters) between the predicted joint coordinates and their corresponding ground-truth locations. 
          For any individual pose, this metric calculates the mean error across all visible GT joints:
          \begin{equation}
              E_{pose} = \frac{1}{J} \sum_{j=1}^{J} | \mathbf{p}_j - \mathbf{g}_j |_2 ,
          \end{equation}
          where $\mathbf{p}_j$ and $\mathbf{g}_j$ denote the estimated and actual 3D coordinates of the $j$-th joint, respectively, and $J$ represents the total count of visible GT joints.

    \item \textbf{Average Precision (AP) and Recall:}
          A predicted human pose qualifies as a True Positive if its overarching pose-level error ($E_{pose}$) falls strictly below a defined threshold $\kappa$. 
          We compute both Average Precision (AP) and Recall across a spectrum of thresholds, specifically $\kappa \in \{25, 50, \dots, 150\}$ mm. 

          Furthermore, we provide \textbf{Recall$_{500\text{mm}}$}, which highlights the fraction of ground-truth subjects successfully localized within a broader 500\,mm error radius.

    \item \textbf{PCP (Percentage of Correct Parts):}
          This metric calculates the ratio of correctly predicted limbs. 
          A specific limb (defined by a starting joint $s$ and an ending joint $e$) is classified as ``correct'' provided that the mean positional error of its two endpoints does not exceed 50\% of the actual ground-truth limb length.
          A limb satisfies this condition if:
          \begin{equation}
              \frac{\| \mathbf{p}_s - \mathbf{g}_s \|_2 + \| \mathbf{p}_e - \mathbf{g}_e \|_2}{2} \leq 0.5 \cdot \| \mathbf{g}_s - \mathbf{g}_e \|_2 .
          \end{equation}

\end{itemize}

We note that PCP does not evaluate the precision of the predicted poses and only considers the recall of correctly localized limbs.
This is because both the Shelf and Campus dataset \cite{belagiannis_3DPictorialStructures_2014} are not annotated exhaustively, and thus the precision of the predicted poses cannot be reliably measured.

\section{Additional Experiments}
\label{app_sec:additional_experiments}

\subsection{Extended CMU Panoptic Comparison}
\label{app_sec:panoptic_full}

For completeness, \cref{app_tab:panoptic_full} reports the CMU Panoptic comparison from \cref{tab:panoptic_main} of the main manuscript with the additional fully supervised baselines (VoxelPose~\cite{tu_VoxelPoseMulticamera3D_2020}, MvP~\cite{wang_DirectMultiviewMultiperson_2021}, Faster VoxelPose~\cite{ye_FasterVoxelPoseRealtime_2022}, MVGFormer~\cite{liao_MultipleViewGeometry_2024}, and MV-SSM~\cite{chharia_MVSSMMultiViewState_2025}) that were omitted from the main paper for space.
Including these methods does not alter the per-category best results as the highlighted entries in \cref{tab:panoptic_main} remain the best within their respective supervision groups.

\begin{table}[h!t]
    \caption{
    \textbf{Extended quantitative comparison on the CMU Panoptic dataset \cite{panoptic}.}
    Full version of \cref{tab:panoptic_main} including all reported fully supervised baselines.
    We report Average Precision (AP) at millimeter thresholds, Recall, and Mean Per Joint Position Error
    (MPJPE) in mm.
    $^{\dagger}$ uses 9 temporal frames as input.
    $^{\ddagger}$ uses the same 2D keypoint detector as our method.
    Best results per supervision category (full-, self-, and optimization-based) are highlighted in {\bestFull blue}, {\bestSelf orange}, and {\bestOpt green}.
    }
    \label{app_tab:panoptic_full}
    \centering
    \small
    \setlength{\tabcolsep}{3pt}
    \begin{tabular*}{\textwidth}{@{\extracolsep{\fill}} @{\hspace{2pt}}l
    S[table-format=2.2] S[table-format=2.2] S[table-format=2.2]
    S[table-format=3.2] S[table-format=2.2] S[table-format=3.2] }
        \toprule
        \multirow{2}{*}{\textbf{Method}}                                           & \multicolumn{4}{c}{\textbf{Average Precision (AP)} ($\uparrow$)} & {\textbf{Recall} ($\uparrow$)} & {\textbf{Error} ($\downarrow$)} \\
        \cmidrule(lr){2-5} \cmidrule(lr){6-6} \cmidrule(lr){7-7}
                                                                                   & {25}                             & {50}                           & {100}                          & {150}              & {@500}             & {MPJPE}            \\
        \midrule
        \multicolumn{7}{l}{\textit{Fully-Supervised}}                                      \\
        \hspace{1em} VoxelPose \cite{tu_VoxelPoseMulticamera3D_2020}                       & 83.59                            & 98.33                          & 99.76                          & 99.91              & {--}               & 17.68              \\
        \hspace{1em} Plane Sweep Pose \cite{lin_MultiViewMultiPerson3D_2021}               & 92.12                            & 98.96                          & \bestFull 99.81                & 99.84              & {--}               & 16.75              \\
        \hspace{1em} MvP \cite{wang_DirectMultiviewMultiperson_2021}                       & 92.28                            & 96.60                          & 97.45                          & 97.69              & {--}               & 15.76              \\
        \hspace{1em} Faster VoxelPose \cite{ye_FasterVoxelPoseRealtime_2022}               & 85.22                            & 98.08                          & 99.32                          & 99.48              & {--}               & 18.26              \\
        \hspace{1em} Wu \textit{et al.} \cite{wu_GraphBased3DMultiPerson_2021}             & 93.93                            & 98.93                          & 99.78                          & 99.90              & \bestFull 99.97    & 15.63              \\
        \hspace{1em} TEMPO \cite{choudhury_TEMPOEfficientMultiView_2023}                   & 89.01                            & \bestFull 99.08                & 99.76                          & \bestFull 99.93    & {--}               & 14.68              \\
        \hspace{1em} MVGFormer \cite{liao_MultipleViewGeometry_2024}                       & 92.32                            & 97.93                          & 99.32                          & 99.55              & 99.86              & 15.99              \\
        \hspace{1em} VoxelPose + 3DSA \cite{chen_3DSAMultiview3D_2025}                     & \bestFull 94.20                  & 98.49                          & 99.21                          & 99.31              & {--}               & \bestFull 13.98    \\
        \hspace{1em} MV-SSM \cite{chharia_MVSSMMultiViewState_2025}                        & 93.50                            & {--}                           & {--}                           & {--}               & {--}               & 15.70              \\
        \midrule
        \multicolumn{7}{l}{\textit{Self-Supervised}}                                       \\
        \hspace{1em} SelfPose3d \cite{srivastav_SelfPose3dSelfSupervisedMultiPerson_2024}  & 55.13                            & \bestSelf 96.44                & \bestSelf 98.46                & \bestSelf 98.98    & \bestSelf 99.60    & 24.47              \\
        \hspace{1em} DSP$^{\dagger}$ \cite{liu_DSPDenseSparseParallel_2025}                & \bestSelf 57.60                  & 86.10                          & 94.00                          & {--}               & {--}               & \bestSelf 23.10    \\
        \midrule
        \multicolumn{7}{l}{\textit{Optimization-Based}}                                    \\
        \hspace{1em} ACTOR \cite{pirinen_DomesDronesSelfSupervised_2019}                   & {--}                             & {--}                           & {--}                           & {--}               & {--}               & 168.40             \\
        \hspace{1em} MvPose$^{\ddagger}$ \cite{dong2019fast}                               & 37.63                            & 95.70                          & 97.84                          & 98.28              & 99.60              & 26.46              \\
        \hspace{1em} \name-ILP (Ours)                                                      & 66.70                            & 98.23                          & \bestOpt 99.43                 & \bestOpt 99.62     & \bestOpt 99.81     & \bestOpt 22.78     \\
        \hspace{1em} \name-BP (Ours)                                                       & \bestOpt 68.88                   & \bestOpt 98.37                 & 99.42                          & 99.61              & \bestOpt 99.81     & \bestOpt 22.78     \\
        \bottomrule
    \end{tabular*}
\end{table}

\subsection{Extended Shelf and Campus Comparison}
\label{app_sec:shelf_campus_full}

For completeness, \cref{app_tab:results_shelf_campus_full} reports the Shelf and Campus comparison from \cref{tab:results_shelf_campus} of the main manuscript with the additional fully supervised baselines (MvP~\cite{wang_DirectMultiviewMultiperson_2021} and Faster VoxelPose~\cite{ye_FasterVoxelPoseRealtime_2022}) that were omitted from the main paper for space.
Including these methods reattributes a small number of per-category best results among the fully supervised group: Faster VoxelPose attains the strongest Shelf-A1 PCP (99.4 vs.\ 99.3 for the main-paper baselines), and MvP the strongest Campus-A1 PCP (98.2 vs.\ 97.7 for TEMPO).
The relative ordering across supervision categories, and all best results in the self-supervised and optimization-based groups, are unchanged.

\begin{table}[h!t]
    \centering
    \caption[Quantitative Comparison Shelf and Campus (Full)]{
    \textbf{Extended quantitative comparison on the Shelf \cite{belagiannis_3DPictorialStructures_2014} and Campus \cite{belagiannis_3DPictorialStructures_2014} datasets.}
    Full version of \cref{tab:results_shelf_campus} including the additional fully supervised baselines that were omitted from the main paper for space.
    Best results per supervision category (full-, self-, and optimization-based) are highlighted in {\bestFull{blue}}, {\bestSelf{orange}}, and {\bestOpt{green}}.
    A1, A2, and A3 refer to Actor 1, 2, and 3, respectively.}
    \label{app_tab:results_shelf_campus_full}
    \small
    \setlength{\tabcolsep}{3pt}
    \begin{tabular*}{\textwidth}{ @{\extracolsep{\fill}} l *{8}{S[table-format=2.1]} @{} }
        \toprule                                                                                              & \multicolumn{4}{c}{\textbf{Shelf (PCP \%)} ($\uparrow$)} & \multicolumn{4}{c}{\textbf{Campus (PCP \%)} ($\uparrow$)} \\
        \cmidrule(lr){2-5} \cmidrule(l){6-9}
        \textbf{Method}                                                                                       & {A1}            & {A2}            & {A3}            & {Avg.}          & {A1}            & {A2}            & {A3}            & {Avg.}          \\
        \midrule
        \multicolumn{9}{l}{\textit{Fully Supervised}}                            \\
        \hspace{1em} VoxelPose \cite{tu_VoxelPoseMulticamera3D_2020}            & 99.3            & 94.1            & 97.6            & 97.0            & 97.6            & 93.8            & {\bestFull{98.8}} & 96.7            \\
        \hspace{1em} Wu et al. \cite{wu_GraphBased3DMultiPerson_2021}           & 99.3            & {\bestFull{96.5}} & 97.3            & {\bestFull{97.7}} & {--}            & {--}            & {--}            & {--}            \\
        \hspace{1em} MvP \cite{wang_DirectMultiviewMultiperson_2021}            & 99.3            & 95.1            & {\bestFull{97.8}} & 97.4            & {\bestFull{98.2}} & 94.1            & 97.4            & 96.6            \\
        \hspace{1em} Faster VoxelPose \cite{ye_FasterVoxelPoseRealtime_2022}    & {\bestFull{99.4}} & 96.0            & 97.5            & 97.6            & 96.5            & 94.1            & 97.9            & 96.2            \\
        \hspace{1em} TEMPO \cite{choudhury_TEMPOEfficientMultiView_2023}        & 99.3            & 95.1            & {\bestFull{97.8}} & 97.4            & 97.7            & {\bestFull{95.5}} & 97.9            & {\bestFull{97.3}} \\
        \midrule
        \multicolumn{9}{l}{\textit{Self-Supervised}}                             \\
        \hspace{1em} SelfPose3d                                                 & {\bestSelf{97.2}} & {\bestSelf{90.3}} & {\bestSelf{97.9}} & {\bestSelf{95.1}} & {\bestSelf{92.5}} & {\bestSelf{82.2}} & {\bestSelf{89.2}} & {\bestSelf{87.9}} \\
        \midrule
        \multicolumn{9}{l}{\textit{Optimization-Based}}                          \\
        \hspace{1em} 3DPS \cite{belagiannis_3DPictorialStructures_2014}         & 75.3            & 69.7            & 87.6            & 77.5            & 93.5            & 75.7            & 84.4            & 84.5            \\
        \hspace{1em} MvPose \cite{dong2019fast}                                 & 98.8            & {\bestOpt{94.1}}  & {\bestOpt{97.8}}  & {\bestOpt{96.9}}  & 97.6            & 93.3            & 98.0            & 96.3            \\
        \hspace{1em} \name-ILP (Ours)                                           & {\bestOpt{99.8}}  & 92.4            & 96.3            & 96.2            & {\bestOpt{99.4}}  & {\bestOpt{94.3}}  & {\bestOpt{98.1}}  & {\bestOpt{97.3}}  \\
        \hspace{1em} \name-BP (Ours)                                            & {\bestOpt{99.8}}  & 92.4            & 96.3            & 96.2            & {\bestOpt{99.4}}  & {\bestOpt{94.3}}  & 93.6            & 95.7            \\
        \bottomrule
    \end{tabular*}
\end{table}

\subsection{Pruning robustness}
\label{app_subsec:pruning_robustness}

The geometric pruning discards hyperedges whose mean squared reprojection error exceeds $\tau$, controlling the recall/precision balance of the candidate set fed to the matching solver.
\Cref{tab:ablation_pruning} sweeps $\sqrt{\tau}$ from 16 to 128 pixels on a subsample of the Panoptic training sequences (every 128\textsuperscript{th} frame; 8 sequences, 545 frames), holding all other parameters at their default values. 
The mAP curve is unimodal at our default $\tau=(32\,\mathrm{px})^2$: tightening to $\tau=(16\,\mathrm{px})^2$ shifts the operating point toward precision -- winning AP$_{25}$ by 1.5 points but losing 2.3 AP$_{50}$ -- while loosening to $\tau \ge (64\,\mathrm{px})^2$ admits noisy candidates that progressively erode AP$_{25}$ from 53.92 down to 41.14 at $\tau=(128\,\mathrm{px})^2$.
Our default sits at the mAP peak and is the joint optimum for AP$_{50}$, AP$_{100}$, and Recall$_{500\text{mm}}$.

\begin{table}[h!t]
\caption{
\textbf{Pruning-threshold ablation on the CMU Panoptic train split \cite{panoptic}.}
Mean-squared reprojection threshold $\tau$ governs which hyperedges survive geometric pruning. All other parameters are kept at their default values (see \cref{app_sec:hyperparameter_details}).
}
\label{tab:ablation_pruning}
\centering
\small
\setlength{\tabcolsep}{4pt}
\begin{tabular*}{\textwidth}{@{\extracolsep{\fill}} @{\hspace{2pt}}l
S[table-format=2.2] S[table-format=2.2]
S[table-format=2.2] S[table-format=2.2]
S[table-format=2.2] S[table-format=2.2] }
    \toprule
    \multirow{2}{*}{\textbf{Threshold} $\tau$}
        & \multicolumn{4}{c}{\textbf{Average Precision (AP)} ($\uparrow$)}
        & {\textbf{mAP}}
        & {\textbf{Recall}} \\
    \cmidrule(lr){2-5} \cmidrule(lr){6-6} \cmidrule(lr){7-7}
        & {25} & {50} & {100} & {150} & {25--150} & {@500} \\
    \midrule
    $(16\,\mathrm{px})^2$               & 55.43 & 87.51 & 95.37 & 97.28 & 87.43 & 99.38 \\
    $(32\,\mathrm{px})^2$ (default)     & 53.92 & 89.78 & 96.51 & 98.15 & 88.35 & 99.55 \\
    $(64\,\mathrm{px})^2$               & 45.13 & 89.34 & 96.47 & 98.38 & 86.83 & 99.32 \\
    $(96\,\mathrm{px})^2$               & 42.67 & 88.53 & 96.24 & 98.33 & 86.11 & 99.27 \\
    $(128\,\mathrm{px})^2$              & 41.14 & 88.26 & 96.26 & 98.28 & 85.76 & 99.27 \\
    \bottomrule
\end{tabular*}
\end{table}

\subsection{The Matching Solver Matters}

To verify that joint matching provides additional benefit beyond what the geometric pruning already provides, we replace the optimizer with a naive greedy baseline. 
Greedy selects the highest-scoring hyperedge whose nodes are still unclaimed; ILP solves for the joint optimum, while BP provides a relaxed message-passing approximation. 
Greedy, given the same pruned graph, underperforms the optimizers by 5.2 AP$_{50}$ and 3.3\,mm MPJPE.

\begin{table}[h!t]
  \caption{
  \textbf{Solver evaluation on the CMU Panoptic test split.}
  The greedy heuristic, applied to the same pruned graph, cannot recover the ILP joint optimum.
  }
  \label{tab:ablation_solver}
  \centering
  \small
  \setlength{\tabcolsep}{4pt}
  \begin{tabular*}{\textwidth}{@{\extracolsep{\fill}} @{\hspace{2pt}}l
  S[table-format=2.2] S[table-format=2.2]
  S[table-format=2.2] S[table-format=2.2]
  S[table-format=2.2] S[table-format=2.2] }
      \toprule
      \multirow{2}{*}{\textbf{Solver}}
          & \multicolumn{4}{c}{\textbf{Average Precision (AP)} ($\uparrow$)}
          & {\textbf{Recall} ($\uparrow$)}
          & {\textbf{Error} ($\downarrow$)} \\
      \cmidrule(lr){2-5} \cmidrule(lr){6-6} \cmidrule(lr){7-7}
          & {25} & {50} & {100} & {150} & {@500} & {MPJPE} \\
      \midrule
      Greedy                       & 61.30 & 93.19 & 97.51 & 98.02 & 99.80 & 26.05 \\
      \name-ILP                    & 66.70 & 98.23 & 99.43 & 99.62 & 99.81 & 22.78 \\
      \name-BP                     & 68.88 & 98.37 & 99.42 & 99.61 & 99.81 & 22.78 \\
      \bottomrule
  \end{tabular*}
\end{table}

\subsection{Robustness of The Matching Penalty}

The ILP penalty $\gamma$ is the only optimizer-side hyperparameter. Sweeping $\gamma \in [2.0, 3.0]$ at the default $\tau=(32\,\mathrm{px})^2$ produces similar AP at every reported threshold, indicating that the joint optimum is well-defined and insensitive to the penalty within a wide band. We use $\gamma = 2.5$ throughout. $\lambda$, in contrast, is cost-function-specific (\cref{tab:ablation_score}) and is tuned per cost function rather than treated as a free parameter.

\subsection{Generalization: Effect of Varying Camera Numbers}
\label{app_sec:additional_qualitative_results}

We study the effect of varying the number of available cameras on the standard CMU Panoptic setup (CMU0) by progressively adding views.
\Cref{app_tab:generalization_cam_count} reports mAP and Recall$_{500\text{mm}}$ for subsets of 3, 4, 6, and 7 cameras.
As expected, all methods benefit from additional views, yet \textsc{\name} exhibits the most consistent gains.
Even in the challenging 3-camera regime, \name-ILP achieves 72.01 mAP, outperforming MvPose (58.05) by a large margin and surpassing the self-supervised SelfPose3d (66.42), which requires training supervision.
As the number of views increases, our hypergraph formulation leverages a richer multi-view consistency signal: at 7 cameras, \name-BP achieves 95.17 mAP, compared to 89.80 for MvPose.
Notably, the gap between \name-ILP and \name-BP remains small across all configurations, suggesting that the BP relaxation provides a close approximation to the exact ILP solution under different evaluated settings.

\begin{table}[h!t]
    \caption{\textbf{Generalization across camera numbers on CMU Panoptic \cite{panoptic}.}
    We report mean Average Precision (mAP) and Recall$_{500\text{mm}}$ (Rec.).
    The number of available cameras for each setup is indicated in parentheses.
    Best results are \textbf{highlighted}}
    \label{app_tab:generalization_cam_count}
    \centering
    \small
    \setlength{\tabcolsep}{3pt}
    \begin{tabular*}{\textwidth}{ @{\extracolsep{\fill}} l l *{8}{S[table-format=2.2]} }
        \toprule
        \multirow{2}{*}{\textbf{Type}} & \multirow{2}{*}{\textbf{Method}} 
        & \multicolumn{2}{c}{\textbf{CMU0(3)}} 
        & \multicolumn{2}{c}{\textbf{CMU0(4)}} 
        & \multicolumn{2}{c}{\textbf{CMU0(6)}} 
        & \multicolumn{2}{c}{\textbf{CMU0(7)}} \\
        \cmidrule(lr){3-4} \cmidrule(lr){5-6} \cmidrule(lr){7-8} \cmidrule(lr){9-10}
        & & {mAP} & {Rec.} & {mAP} & {Rec.} & {mAP} & {Rec.} & {mAP} & {Rec.} \\
        \midrule
        Self-Sup. & SelfPose3d \cite{srivastav_SelfPose3dSelfSupervisedMultiPerson_2024}
        & 66.42 & 93.40 & 86.59 & {\bfseries 99.44} & 78.77 & 99.41 & 78.73 & 99.20 \\
        \midrule
        \multirow{3}{*}{Optim.}
        & MvPose \cite{dong2019fast}
        & 58.05 & 96.65 & 79.13 & 98.99 & 89.00 & 99.68 & 89.80 & 99.73 \\
        & \name-ILP (Ours)
        & {\bfseries 72.01} & {\bfseries 97.92} & {\bfseries 86.65} & {\bfseries 99.44} & 94.34 & {\bfseries 99.86} & 95.06 & {\bfseries 99.86} \\
        & \name-BP (Ours)
        & 70.01 & {\bfseries 97.92} & 86.52 & 99.43 & {\bfseries 94.58} & {\bfseries 99.86} & {\bfseries 95.17} & {\bfseries 99.86} \\
        \bottomrule
    \end{tabular*}
\end{table}

\subsection{Effect of Different Cost Functions}
\label{app_sec:ablation_cost_functions}

We evaluate four geometric cost functions $\mathcal{C}$ for hyperedge weighting:

\begin{enumerate}[leftmargin=*,itemsep=0.2em]
    \item The \emph{epipolar distance} \cite{hartley2003multiple} computes the average absolute epipolar constraint over all view pairs in normalized camera coordinates.
    \item The \emph{Sampson error} \cite{hartley2003multiple} normalizes the squared epipolar constraint by the epipolar line gradients, approximating the geometric distance to the epipolar line in pixel space using the fundamental matrix.
    \item The \emph{trifocal tensor} \cite{hartley1997lines} generalizes to view triples by evaluating an algebraic constraint derived from the trifocal tensor using the known projection matrices, averaged over all triples within the hyperedge; for hyperedges containing only two views, we fall back to the Sampson error.
    \item The \emph{reprojection error} (our default) \cite{hartley2003multiple} triangulates a 3D point and measures the average squared pixel distance to the observed detections.
\end{enumerate}

\Cref{tab:ablation_score} compares geometric and photometric cost functions for hyperedge weighting. 
Each cost function uses its own tuned $\lambda$. 
Within the geometric scores, the epipolar distance performs worse than the alternatives. 
Sampson error and the trifocal tensor yield similar results, whereas the reprojection error provides the strongest geometric baseline, especially at AP$_{25}$ and in MPJPE. 
For the photometric cues, ``alone'' applies a feature-distance threshold to prune candidate hyperedges and uses the same cosine-similarity score for matching; ``$\times$'' applies the photometric score multiplicatively on top of the geometric pruning. 
Person re-ID features \cite{zhou2021learning} alone produce a viable but inferior matching cue ($-$4.2 AP$_{50}$, $+$1.6\,mm MPJPE vs.\ reprojection), confirming that appearance carries some multi-view correspondence signal but is dominated by geometric consistency.
DinoV2 features \cite{oquab2024dinov2} alone fail ($-$65 AP$_{50}$, $+$34\,mm MPJPE), suggesting that generic self-supervised features might not be discriminative enough between people for this task.
When applied on top of the geometric pruning, photometric scores have no measurable effect.

 \begin{table}[h!t]
      \caption{
      \textbf{Different cost functions on CMU Panoptic \cite{panoptic}.}
      We compare different cost functions $\mathcal{C}$ used for hyperedge weighting in \name-ILP.
      The first block lists geometric cost functions.
      The second extends the ablation with photometric alternatives: person re-identification features (OSNet-AIN \cite{zhou2021learning}, 512-d) and DinoV2 features (ViT-B/14 \cite{oquab2024dinov2}, 768-d).
      ``Alone'' uses the photometric cue both for pruning and for matching score; ``$\times$'' multiplies the photometric score onto the geometric one on the geometrically-pruned graph.
      }
      \label{tab:ablation_score}
      \centering
      \small
      \setlength{\tabcolsep}{3pt}
      \begin{tabular*}{\textwidth}{@{\extracolsep{\fill}} @{\hspace{2pt}}l
      S[table-format=2.2] S[table-format=2.2] S[table-format=2.2]
      S[table-format=3.2] S[table-format=2.2] S[table-format=3.2] }
          \toprule
          \multirow{2}{*}{\textbf{Score Function}}
              & \multicolumn{4}{c}{\textbf{Average Precision (AP)} ($\uparrow$)}
              & {\textbf{Recall} ($\uparrow$)}
              & {\textbf{Error} ($\downarrow$)} \\
          \cmidrule(lr){2-5} \cmidrule(lr){6-6} \cmidrule(lr){7-7}
              & {25} & {50} & {100} & {150} & {@500} & {MPJPE} \\
          \midrule
          Epipolar Distance               & 62.93 & 96.22 & 97.45 & 97.70 & 97.98 & 23.01 \\
          Sampson Error                   & 63.46 & 97.77 & 99.03 & 99.21 & 99.42 & 23.02 \\
          Trifocal Tensor                 & 63.54 & 97.93 & 99.26 & 99.49 & 99.69 & 23.08 \\
          Reprojection Error (default)    & 66.70 & 98.23 & 99.43 & 99.62 & 99.81 & 22.78 \\
          \midrule
          Person Re-ID (alone)            & 57.92 & 94.00 & 95.64 & 95.80 & 96.45 & 24.35 \\
          DinoV2 (alone)                  & 22.35 & 32.86 & 34.30 & 34.80 & 44.94 & 57.26 \\
          Reprojection $\times$ Re-ID     & 66.69 & 98.25 & 99.45 & 99.64 & 99.82 & 22.78 \\
          Reprojection $\times$ DinoV2    & 66.68 & 98.25 & 99.45 & 99.63 & 99.82 & 22.79 \\
          \bottomrule
      \end{tabular*}
  \end{table}

\subsection{Effect of Different 2D Pose Estimation Prior}
\label{app_sec:ablation_2d_pose_prior}

We investigate the sensitivity of \name-ILP to the quality of the 2D pose prior by varying the backbone capacity of both the person detector and the top-down pose estimator. 
Specifically, we evaluate four configurations on the CMU Panoptic dataset: the \texttt{K-RCNN}+\texttt{HRNet} setup used by SelfPose3d \cite{srivastav_SelfPose3dSelfSupervisedMultiPerson_2024} to generate its pseudo labels, a lightweight setup using a ResNet-18 backbone for RT-DETRv2~\cite{lv2024rt} (\texttt{R-18}) paired with the small version of the ViTPose++ pose estimator~\cite{xu2022vitpose} (\texttt{Small}), an intermediate setup with a ResNet-50 backbone (\texttt{R-50}) and the base pose-estimator variant (\texttt{Base}), and our default configuration using ResNet-101 (\texttt{R-101}) with the huge pose-estimator variant (\texttt{Huge}).

\Cref{tab:ablation_2d_prior} reports the results.
All configurations achieve comparable performance at relaxed AP thresholds (AP$_{100}$ and above) and Recall$_{500\text{mm}}$, indicating that our hypergraph formulation is robust to moderate variations in 2D input quality.
The primary differences emerge at the strictest threshold (AP$_{25}$), where the stronger backbone yields a $+$3.25 improvement (66.70 vs.\ 63.45), and in MPJPE, where more precise 2D localizations translate into lower triangulation error (22.78\,mm vs.\ 23.43\,mm).
This indicates that while the correspondence matching stage of \textsc{\name} is largely insensitive to the 2D backbone, the final triangulation accuracy naturally benefits from more precise 2D keypoint detections.

\begin{table}[htbp]
    \centering
    \caption{
    \textbf{Effect of different 2D pose estimation priors on the CMU Panoptic dataset \cite{panoptic}.}
    We vary the person detector (\texttt{K-RCNN}~\cite{he2017mask}, \texttt{RT-DETR} variants~\cite{lv2024rt}) and pose estimator (\texttt{HRNet}~\cite{sun2019deep}, \texttt{ViTPose} variants~\cite{xu2022vitpose}) backbones in \name-ILP.
    The \texttt{K-RCNN}+\texttt{HRNet} configuration corresponds to the 2D prior used by SelfPose3d \cite{srivastav_SelfPose3dSelfSupervisedMultiPerson_2024} to generate pseudo labels.
    Best results are highlighted in \textbf{bold}.
    }
    \label{tab:ablation_2d_prior}
    \setlength{\tabcolsep}{3pt}
    \begin{tabular*}{\textwidth}{@{\extracolsep{\fill}} @{\hspace{2pt}}ll
      S[table-format=2.2, detect-weight, mode=text]
      S[table-format=2.2, detect-weight, mode=text]
      S[table-format=2.2, detect-weight, mode=text]
      S[table-format=3.2, detect-weight, mode=text]
      S[table-format=2.2, detect-weight, mode=text]
      S[table-format=3.2, detect-weight, mode=text] }
        \toprule
        \multirow{2}{*}{\textbf{Person Detector}}
            & \multirow{2}{*}{\textbf{Pose Estimator}}
            & \multicolumn{4}{c}{\textbf{Average Precision (AP)} ($\uparrow$)}
            & {\textbf{Recall} ($\uparrow$)}
            & {\textbf{Error} ($\downarrow$)} \\
        \cmidrule(lr){3-6} \cmidrule(lr){7-7} \cmidrule(lr){8-8}
            & & {25} & {50} & {100} & {150} & {@500} & {MPJPE} \\
        \midrule
        \texttt{K-RCNN} & \texttt{HRNet}  & 60.32 & 97.62 & 99.23 & 99.47 & 99.73 & 23.91 \\
        \texttt{R-18}   & \texttt{Small}  & 63.45 & 97.79 & 99.26 & 99.51 & 99.75 & 23.43 \\
        \texttt{R-50}   & \texttt{Base}   & 62.97 & 98.22 & 99.33 & 99.52 & \bfseries 99.81 & 23.67 \\
        \texttt{R-101}  & \texttt{Huge}   & \bfseries 66.70 & \bfseries 98.23 & \bfseries 99.43 & \bfseries 99.62 & \bfseries 99.81 & \bfseries 22.78 \\
        \bottomrule
    \end{tabular*}
\end{table}

\subsection{Cross-View Consensus Accuracy}
\label{app_subsec:cross_view_ass_acc}

We isolate the quality of the matching stage by checking, for every ground-truth person, whether \textbf{all} of their 2D appearances are assigned into the same predicted hyperedge (correspondence set).
A person is considered correctly associated when no view is split off into a different set and no other person's detections are merged in. 
Both methods are evaluated on the same input 2D detections.

\name correctly associates 94.9\% of persons perfectly compared to 87.5\% for MvPose; a direct benefit of jointly reasoning over all views as hyperedges rather than fusing pairwise matches.

\begin{table}[ht]
  \centering
  \small
  \caption{\textbf{Cross-view consensus accuracy on CMU Panoptic test \cite{panoptic}}
  Fraction of ground-truth persons where all views are correctly assigned to a single hyperedge (correspondence set).}
  \label{tab:association_accuracy}
  \begin{tabular}{lc}
  \toprule
  \textbf{Method} & \textbf{Correctly Associated} \\
  \midrule
  MvPose~\cite{dong2019fast} & 87.5\% \\
  \name-ILP (Ours)             & \textbf{94.9\%} \\
  \bottomrule
  \end{tabular}
\end{table}

\subsection{Hypergraph Construction}
\label{app_subsec:hierarchical_hypergraph_construction}

Materializing the full $V$-partite candidate set scales as $O(N^V)$ and exhausts a 16 GB GPU around $V=8$ (\cref{tab:memory_profile}); we instead build the hypergraph lazily. 
Order-2 hyperedges are formed from every pair of detections across distinct views and pruned by the threshold $\tau$; each subsequent order is constructed by extending the surviving $k$-hyperedges with one detection from a new view and re-pruning under the same threshold. 
Peak memory, therefore, scales with the number of geometrically plausible survivors at each level rather than with the combinatorial space, enabling inference at $V \ge 10$ on commodity hardware.

\begin{table}[h!t]
  \caption{
  \textbf{Peak GPU memory of hyperedge construction across camera counts on CMU Panoptic \cite{panoptic}}
  Reported as the mean over all test frames, in GB.
  The flat enumerate-then-prune approach materializes the full $O(N^V)$ candidate space before applying the geometric threshold $\tau$ and runs out of memory on a 16 GB GPU at $V \ge 8$.
  Our construction prunes at every order and stays under 0.1 GB up to $V=10$.
  }
  \label{tab:memory_profile}
  \centering
  \small
  \setlength{\tabcolsep}{4pt}
  \begin{tabular*}{\textwidth}{@{\extracolsep{\fill}} @{\hspace{2pt}}l
  S[table-format=1.3] S[table-format=1.3] S[table-format=1.3]
  S[table-format=1.3] S[table-format=1.3] S[table-format=1.3]
  S[table-format=1.3] S[table-format=1.3] S[table-format=1.3] }
      \toprule
      \multirow{2}{*}{\textbf{Construction}}
          & \multicolumn{9}{c}{\textbf{Number of Views} $V$} \\
      \cmidrule(lr){2-10}
          & {2} & {3} & {4} & {5} & {6} & {7} & {8} & {9} & {10} \\
      \midrule
      Enumerate-Then-Prune     & 0.008 & 0.010 & 0.022 & 0.103 & 0.630 & 5.819 & {OOM} & {OOM} & {OOM} \\
      Lazily (Ours)      & 0.008 & 0.008 & 0.009 & 0.010 & 0.012 & 0.017 & 0.026 & 0.045 & 0.085 \\
      \bottomrule
  \end{tabular*}
\end{table}

\section{Broader Societal Impact}
\label{app_sec:societal_impact}

This work advances optimization-based multi-view multi-person 3D pose estimation without requiring 3D supervision. 
It may benefit applications such as sports analysis, healthcare, operating-room monitoring, and human-robot collaboration, where calibrated multi-camera setups are common and accurate spatial localization is crucial. 
However, 3D human pose estimation raises privacy concerns, including biometric profiling and potential surveillance misuse. 
These risks are partly mitigated by the controlled acquisition setting required by \textsc{\name}, which assumes multiple synchronized calibrated cameras and is therefore less suited to unconstrained public deployment.

\end{document}